\LetLtxMacro{\originaleqref}{\eqref}
\def\eqref#1{equation~\originaleqref{#1}}
\def\1{\bm{1}}
\def\vb{{\bm{b}}}
\def\vd{{\bm{d}}}
\def\ve{{\bm{e}}}
\def\vf{{\bm{f}}}
\def\vm{{\bm{m}}}
\def\vn{{\bm{n}}}
\def\vr{{\bm{r}}}
\def\vu{{\bm{u}}}
\def\vv{{\bm{v}}}
\def\vx{{\bm{x}}}
\def\vz{{\bm{z}}}
\def\mR{{\bm{R}}}
\DeclareMathAlphabet{\mathsfit}{\encodingdefault}{\sfdefault}{m}{sl}
\SetMathAlphabet{\mathsfit}{bold}{\encodingdefault}{\sfdefault}{bx}{n}
\theoremstyle{plain}
\newtheorem{theorem}{Theorem}[section]
\newtheorem{proposition}[theorem]{Proposition}
\theoremstyle{definition}
\theoremstyle{remark}
\DeclareMathOperator{\atantwo}{atan2}
\newcommand{\eqq}[1]{(\ref{#1})}
\newcommand{\mlp}[1]{\emph{SO2-MLP}}
\newcommandx{\Maria}[2][1=]{\todo[linecolor=red,backgroundcolor=red!25,bordercolor=red,#1]{#2}}
\newcommandx{\Aleksis}[2][1=]{\todo[linecolor=blue,backgroundcolor=blue!25,bordercolor=blue,#1]{#2}}
\icmltitlerunning{Flexible SE(2) graph neural networks with applications to PDE surrogates}
\begin{document}

\twocolumn[
\icmltitle{Flexible SE(2) graph neural networks with applications to PDE surrogates}

\icmlsetsymbol{equal}{*}

\begin{icmlauthorlist}
\icmlauthor{Maria Bånkestad}{rise,uu}
\icmlauthor{Olof Mogren}{rise,climes}
\icmlauthor{Aleksis Pirinen}{rise}
\end{icmlauthorlist}

\icmlaffiliation{rise}{RISE Research Institutes of Sweden}
\icmlaffiliation{uu}{Uppsala University}
\icmlaffiliation{climes}{Swedish Centre for Impacts of Climate Extremes (climes)}

\icmlcorrespondingauthor{Maria Bånkestad}{maria.bankestad@ri.se}

\icmlkeywords{geometric deep learning, PDE surrogates}

\vskip 0.3in
]

\printAffiliationsAndNotice{}

\begin{abstract}
This paper presents a novel approach for constructing graph neural networks equivariant to 2D rotations and translations and leveraging them as PDE surrogates on non-gridded domains. We show that aligning the representations with the principal axis allows us to sidestep many constraints while preserving SE(2) equivariance.  By applying our model as a surrogate for fluid flow simulations and conducting thorough benchmarks against non-equivariant models, we demonstrate significant gains in terms of both data efficiency and accuracy. Code is available at \url{https://github.com/mariabankestad/SE2-GNN}.
\end{abstract}

\section{Introduction}\label{introduction}

Accurate and efficient numerical simulations of partial differential equations (PDEs) are important across various scientific disciplines, such as chemistry \citep{van1990computer}, physics \citep{ferziger2019computational}, and environmental sciences \citep{palmer2019stochastic}. Over the years, extensive research has produced diverse numerical methods for simulating PDEs, including finite difference, element, and volume methods \citep{peiro2005finite}. However, conducting high-fidelity simulations requires substantial computational resources and time, especially for complex systems or large-scale problems. As a result, there has been a growing interest in leveraging machine learning (ML) techniques to approximate these simulations more efficiently \citep{lavin2021simulation}. Surrogate ML models aim to approximate the behavior of PDE simulations with significantly lower computational costs while maintaining reasonable accuracy \citep{li2020neural, gupta2022towards, gladstone2024mesh}. 

Our focus is on problems in two spatial dimensions. Fluid simulations are frequently conducted in two dimensions since reducing problems from three to two dimensions simplifies the PDE and reduces computational costs. Moreover, many natural phenomena exhibit inherent two-dimensional symmetries, so limiting the simulation to two dimensions is natural. Moreover, PDEs often encode physical principles reflecting the fundamental symmetries of the systems they describe. For instance, the 2D Navier-Stokes equations (see Section~\ref{sec:navier-eqs}), essential in fluid dynamics simulations, exhibit inherent equivariance under the SE(2) group, meaning they retain their structure under rotations and translations in two-dimensional space. 

Irregular grids are commonly used in real-world fluid simulations to capture complex flows, such as flows around complex geometries. Therefore, developing models that work well on these grids is crucial for accurately representing real-world phenomena. Graph neural networks (GNNs), representing the computational domain as a graph, excel in such scenarios.
GNNs can also adapt to different grid resolution levels, making it possible to, for example, have higher resolutions in regions with high flow complexity and lower resolution where the complexity is low. This makes GNNs suitable as PDE surrogates for many real-world scenarios. 
\looseness=-1

\begin{figure}[t]
\vskip 0.25in
\begin{center}
\centerline{\includegraphics[width=0.95\linewidth]{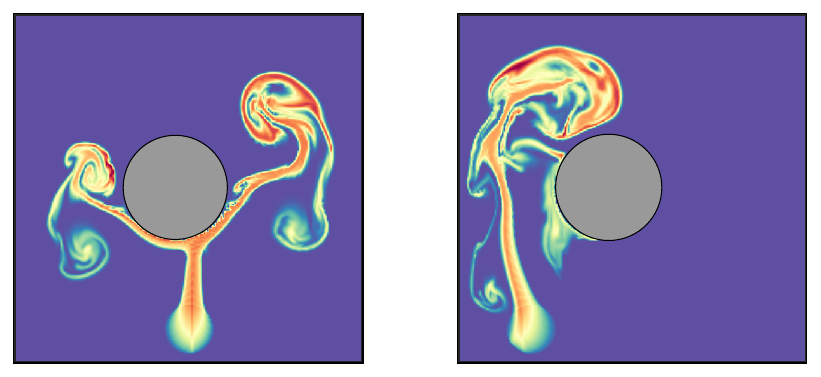}}
\caption{A snapshot from two simulations (with an upwards facing force) of smoke flowing around an obstacle, where the location of the smoke inflow differs.}
\label{fig:flow}
\end{center}
\vskip -0.1in
\end{figure}

We present a 2D equivariant GNN tailored for fluid flow simulations on irregular grids, which leverages equivariance to improve accuracy and efficiency. Our GNN is specifically tailored to address 2D problems equivariant to the SE(2) group. Focusing on SE(2) and two dimensions offers distinct advantages since we can simplify the model compared to alternatives that accommodate higher dimensions or more complex symmetry groups, thereby ensuring scalability. By leveraging the projection of 2D data onto a specific axis within the plane, we reduce the equivariance requirement from SE(2) to SE(1), thereby allowing for arbitrary nonlinear functions and message-passing convolutions. This, in turn, enables the incorporation of arbitrary message-passing layers, which enhances model flexibility. Experiments (Section~\ref{sec:experiments}) on fluid flow simulations show a clear performance boost in terms of data efficiency and accuracy compared to the model's non-equivariant counterpart.

\section{Background}\label{background}
This section summarizes the essentials for understanding subsequent sections. We first define equivariance and then introduce the graph neural network (GNN) model. Following this, we also describe the fundamentals of the Navier-Stokes equation, a cornerstone in many of our experiments. Finally, we describe the concept of ML surrogate models.
\subsection{Equivariance}
Leveraging equivariance has the potential to improve data efficiency in machine learning. It ensures that a model's predictions remain consistent under transformations of the input data so that the model does not have to learn separate representations for each transformation. The equivariance property is particularly important in tasks where the data exhibits symmetries, such as rotation or translation. 

A function $ f: X \to X$ is considered equivariant with respect to a group $G$ if it satisfies the following condition:
\[ f(g \cdot x) = g \cdot f(x) \quad \text{for all} \quad g \in G. \]

In simpler terms, we can transform the data by $g \in G$ before or after we apply the function $f (x)$, and the outcome will be the same. 

\subsection{Graph neural networks}
A graph $\mathcal{G} = (\mathcal{V}, \mathcal{E})$ consists of nodes $i \in \mathcal{V}$ and edges $(i,j) \in \mathcal{E}\subseteq \mathcal{V}\times\mathcal{V}$, which define the relationships between the nodes $i$ and $j$. A graph neural network (GNN) comprises multiple message-passing layers. At each layer $k$, given a node feature $ \vx_i^k $ at node $ i$, its neighboring nodes $ \{ \vx_j^k: j \in \mathcal{N}(i)\}$
and edge features $ \ve_{ij}^k $ between node $i$ and its neighbors, the message-passing procedure is defined as follows:
\begin{align}\label{eq:message_passing}
    \vm_{ij}^k &= f^\text{m}(\vx_i^k, \nonumber \vx_j^k, \ve_{ij}^k), \\
    {\vx}_i^{k+1/2} &= f^\text{a}_{j \in \mathcal{N}(i)} (\vm_{ij}^k),\\
    \vx_i^{k+1} &= f^\text{u}(\vx_i^k, {\vx}_i^{k+1/2}), \nonumber
\end{align}
where $f^\text{m} $ is the message function, determining the message from node $ j $ to node $ i $, and $f^\text{a}_{j \in \mathcal{N}(i)} $ aggregates messages from the neighbors of node $ i $, denoted $ \mathcal{N}(i) $. The aggregation function $ f^\text{a}$ often involves a simple summation or averaging. Finally, \( f^\text{u} \) is the update function that modifies the features for each node. These message-passing layers are stacked in a GNN, where the output from one layer serves as the input to the next.

\subsection{Navier-Stokes equations}\label{sec:navier-eqs}
The Navier-Stokes equations, a fundamental PDE, are a cornerstone of fluid dynamics. The equations describe fluid flow behaviors in various physical systems and are derived from fundamental principles such as mass and momentum conservation. Due to their versatility, they serve as a powerful tool for understanding complex fluid phenomena across a wide range of disciplines, from engineering to meteorology.

The incompressible Navier-Stokes equations \citep{chung2002computational} are given by:
\begin{align}\label{eq:navier}
    &\frac{\partial \vv}{\partial t} = -\overbrace{ \ \vv \cdot \nabla \vv \ }^{\parbox{1.2cm}{ \footnotesize Convection \\}} + \overbrace{\nu \nabla^2 \vv}^{\parbox{1.2cm}{ \footnotesize Viscosity \\}} -\overbrace{ \ \frac{1}{\rho}\nabla p \ }^{\parbox{1.2cm}{ \footnotesize Internal pressure \\}}  + \overbrace{\vf,}^{\parbox{1.2cm}{ \footnotesize External force \\}}\nonumber \\
    &\nabla \cdot \vv = 0 . 
\end{align}
Here, $\vv$ is the velocity vector field of the fluid, $\rho$ is the density of the fluid, $\mu$ is its viscosity (a measure of a fluid's resistance to flow), and $\vf$ represents any external forces acting on the fluid (e.g.~gravity).

The Navier-Stokes equations are nonlinear, meaning traditional linear techniques are insufficient to solve them; instead, more advanced methods are used, which approximate the solution using numerical methods \citep{griebel1998numerical}. When limiting the problem to 2D, the equations are inherently SE(2) equivariant. In other words, if we rotate or translate the coordinate system, the equations remain valid in the new coordinate system, and the solution will rotate or translate accordingly.

\subsection{PDE surrogate models}
We focus on time-dependent PDEs, where we want to approximate the solution at the next timestep, given the current one -- thus, we are looking for a surrogate for a traditional numerical solver. For example, consider the Navier-Stokes equations \eqq{eq:navier} with some domain parameters $\vd$. Then, we seek an ML model that can approximate the velocity field at the next timestep  $\vv^{t+1}$, knowing the current one $\vv^t$. We can express this as:
\begin{equation}
    \vv^{t+1} = f_\eta(\vv^{t}; \vd),
\end{equation}
where $f_\eta$ with parameters $\eta$ is our surrogate model.

For simplicity, the domain is typically discretized into a grid or mesh. This grid consists of discrete points indexed by $i$ and located at $\vr_i$ and their connections to their neighbors. This discretization naturally lends itself to a graphical representation, where nodes correspond to grid points and edges their connections. Leveraging this graph structure, we can employ GNNs as our surrogate model.

\section{SE(2) graph neural network}\label{sec:method}

This section outlines the main components of our SE(2) graph neural network (GNN), designed to be equivariant to the SE(2) group and capable of handling both scalars and vectors as inputs and outputs. Our model operates on a data structure represented as a graph, which includes node features $\vx_i$, edges $(i,j)$, and node locations $\vr_i$, all situated in a 2D plane. This graph could represent various scenarios, such as a snapshot of a fluid flow simulation. 

\textbf{SO(2) equivariance.}
To simplify the model's requirements to only being equivariant under SO(2), we conduct two actions: First, we change the center of mass for the nodes to zero, ensuring the graph's global translation equivariance. Second, we exclusively consider relative distances $\hat{\vr}_{ij} = {\vr}_{j}- {\vr}_{i}$ between nodes and their neighbors rather than the absolute coordinates in the message-passing modules. This adjustment ensures local translation equivariance. Now, the requirement for the different modules is reduced to only SO(2) equivariance. In the remainder of this section, we assume that the center of mass of the nodes in the graph is zero, and we will frequently utilize the concept of SO(2) equivariance instead of SE(2) equivariance.

The most basic component of our model is the rotation matrix $\boldsymbol{R}_\theta$, a transformation matrix used to perform a rotation in 2D space (Equation~\ref{eq:rot}). 

\begin{equation}
\vcenter{\hbox{\includegraphics[width=2.5cm,height=2.5cm]{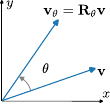}}}
\qquad
\begin{aligned}\label{eq:rot}
\mR_{\theta} = \begin{bmatrix}
    \cos \theta & -\sin \theta\\
    \sin \theta & \cos \theta
\end{bmatrix}
\end{aligned}
\end{equation}

The rotation matrix rotates vectors $\vv \in \mathbb{R}^2$ by the angle $\theta$ into $\vv_\theta$, i.e.~\mbox{$\vv_\theta= \mathbb{R}_\theta \vv$}. Throughout this section, we use $\theta$ for local rotation angles (around neighboring nodes) and $\alpha$ for global rotation angles.
\begin{figure*}[t]
\vskip 0.2in
\begin{center}
\centerline{\includegraphics[width=0.9999\linewidth]{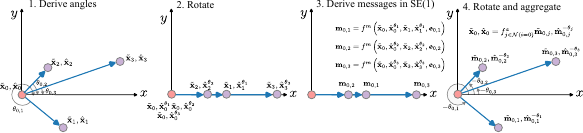}}
\caption{\textbf{The message-passing procedure.} We rotate the node features by the angles $\theta_i$ so that they all align with the $x$-coordinate axis. The scalar features $\tilde{\vx}_i$ and the rotation feature $\hat{\vx}_i$ can now be concatenated and inputted to any arbitrary message-passing function $f^m$. The messages are then rotated back and aggregated by arbitrary aggregation function $f^a$.}
\label{fig:mess_explained}
\end{center}
\vskip -0.1in
\end{figure*}
Assume that we want to apply a function $f$ on a node feature $\vx_i$ with position $\vr_i$ in the 2D plane and that this function is equivariant, such that $f(\mR \boldsymbol{x}_i, \mR \boldsymbol{r}_i) = \mR f(\boldsymbol{x}_i, \boldsymbol{r}_i)$ for all $\mR \in \text{SO(2)}$. We aim to lessen the equivariance constraint on $f$ by aligning all nodes along a shared axis, as doing so reduces the problem's symmetry to SO(1), which is nothing but the identity. We can, therefore, align the nodes to the $x$-axis, apply any nonlinear function, and then rotate the nodes back, and the equivariance criteria will still hold. We thus have the following proposition.

\begin{proposition}\label{prop:equivariance}
If $\mR_{\alpha_i}$ is a rotation to align $\vx_i, \vr_i$ to the $x$-axis, where $\alpha_i$ is the rotation angle, and $\mR_{-\alpha_i}$ is the rotation matrix inverse. Then, for a function
\begin{equation}
    f = \mR_{-\alpha_i} \circ g \circ \mR_{\alpha_i},
\end{equation} where $g$ is any nonlinear function, it holds that
\begin{equation}
    \mR  f\left( \vx_i ,\vr_i\right) = f\left (\mR\vx_i,\mR\vr_i \right), \quad \forall \mR \in SO(2).
\end{equation}
Thus, $f$ is equivariant under the SO(2) group.
\end{proposition}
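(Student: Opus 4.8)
The plan is to unwind the definition of $f = \mR_{-\alpha_i} \circ g \circ \mR_{\alpha_i}$ and track how the alignment angle $\alpha_i$ transforms when we pre-rotate the input by an arbitrary $\mR \in SO(2)$. The key observation is that the alignment angle is itself a geometric quantity: if $\alpha_i$ is the angle that aligns $(\vx_i, \vr_i)$ to the $x$-axis, then after rotating the input by $\mR_\beta$, the data $(\mR_\beta\vx_i, \mR_\beta\vr_i)$ now needs $\alpha_i - \beta$ to be brought to the $x$-axis. So I would first establish this equivariance property of the alignment map, namely $\alpha_i(\mR_\beta \vx_i, \mR_\beta \vr_i) = \alpha_i(\vx_i, \vr_i) - \beta$, which follows from the fact that SO(2) is abelian and that "aligning to the $x$-axis" is defined through the angular coordinate of the data.

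Next, writing $\mR = \mR_\beta$ for some angle $\beta$, I would compute $f(\mR_\beta \vx_i, \mR_\beta \vr_i)$ directly. By the definition of $f$ and the alignment property above, this equals $\mR_{-(\alpha_i - \beta)} \, g\!\left( \mR_{\alpha_i - \beta} \mR_\beta (\vx_i, \vr_i) \right) = \mR_{\beta - \alpha_i}\, g\!\left( \mR_{\alpha_i}(\vx_i, \vr_i) \right)$, using $\mR_{\alpha_i - \beta}\mR_\beta = \mR_{\alpha_i}$ by additivity of rotation angles. Since $\mR_{\beta - \alpha_i} = \mR_\beta \mR_{-\alpha_i}$, this is $\mR_\beta \, \mR_{-\alpha_i}\, g\!\left(\mR_{\alpha_i}(\vx_i,\vr_i)\right) = \mR_\beta f(\vx_i, \vr_i)$, which is exactly the claimed identity $\mR f(\vx_i,\vr_i) = f(\mR\vx_i,\mR\vr_i)$.

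The main obstacle — and the step deserving the most care — is pinning down precisely what "$\mR_{\alpha_i}$ aligns $(\vx_i,\vr_i)$ to the $x$-axis" means and why the resulting alignment map transforms as claimed. This is slightly subtle because $g$ acts on a pair (a scalar-and-rotation feature, per Figure~\ref{fig:mess_explained}), so one must be explicit about which component's angular coordinate defines $\alpha_i$, and one must handle the degenerate case where that component is the zero vector (where $\alpha_i$ is ill-defined, but any choice works since the statement is about equivariance of the composite). Once the alignment map's behavior under rotation is fixed, the rest is a two-line computation exploiting commutativity of SO(2) and the homomorphism property $\mR_{\theta_1}\mR_{\theta_2} = \mR_{\theta_1 + \theta_2}$.
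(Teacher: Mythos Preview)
Your proposal is correct and follows essentially the same route as the paper's proof: both identify the key fact that the alignment angle transforms as $\alpha_i \mapsto \alpha_i - \beta$ under a pre-rotation by $\mR_\beta$, then use additivity of rotation angles ($\mR_{\theta_1}\mR_{\theta_2}=\mR_{\theta_1+\theta_2}$) to collapse the composite. The only cosmetic difference is that the paper expands both $\mR_\beta f(\vx_i,\vr_i)$ and $f(\mR_\beta\vx_i,\mR_\beta\vr_i)$ separately in polar coordinates and compares them, whereas you transform one side into the other; your added remarks on the degenerate case and on which component determines $\alpha_i$ are not in the paper but do no harm.
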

\begin{proof}
    See Appendix \ref{app:proof_equivariance} for proof of this proposition.
\end{proof}


\textbf{Rotational features.}
We want our model to use scalar and vectors as inputs and outputs. We therefore separate the node features into scalar features $\tilde{\vx}_i$ and rotational features $\hat{\vx}_i$. Rotational features represent all features that have a direction in space and are composed of $N_r$ vectors in $\mathbb{R}^2$ stacked together:
\begin{equation}
    \hat{\vx}_i =  \hat{\vx}_i^1\oplus \hdots \oplus  \hat{\vx}^{N_r}_i.
\end{equation}
Here $\oplus$ stands for concatenating. When we apply a rotation $\mR_{\alpha_i}$ to $\hat{\vx}_i$ we individually apply the rotation to each $\hat{\vx}_i^n$, which results in:
\begin{equation}
    \mR_{\alpha_i}\hat{\vx}_i =  \mR_{\alpha_i}\hat{\vx}_i^1\oplus \hdots \oplus  \mR_{\alpha_i}\hat{\vx}^{N_r}_i.
\end{equation}
To simplify the notation, we henceforth use $\mR_{\alpha_i}\hat{\vx}_i$ to describe a rotation.

\textbf{Rotational MLP.}
Using Proposition \ref{prop:equivariance}, we define an SO(2) equivariant multi-layer perceptron (MLP), denoted \mlp{}:
\begin{align}\label{eq:rot_mlp}
    \tilde{\vx}_i \oplus \hat{\vx}_i^{r, in} &= \tilde{\vx}_i^\text{in} \oplus \mR_{\alpha_i} \hat{\vx}_i^\text{in}, \nonumber \\
    \tilde{\vx}_i^\text{out}\oplus \hat{\vx}_i^{r,out} &= \text{MLP}( \tilde{\vx}_i \oplus \hat{\vx}_i^{r, in}), \\
     \tilde{\vx}_i^\text{out} \oplus \hat{\vx}_i^\text{out} &= \tilde{\vx}_i^\text{out} \oplus \mR_{-\alpha_i} \hat{\vx}_i^{r,out}. \nonumber
\end{align}
Here, the input to the MLP is $\tilde{\vx}_i$ and $\hat{\vx}_i^{r, in}$ to enhance flexibility. It is worth noting that the dimensions of the inputs $\tilde{\vx}_i^\text{in}, \hat{\vx}_i^\text{in}$ and outputs $\tilde{\vx}_i^\text{out}, \hat{\vx}_i^\text{out}$ can differ. The only requirement is that the dimension of the rotational features must be a multiple of two since that is the dimension of vector (or representation) in two dimensions.

We will next describe the key components of our GNN. As a first step, we embed the input so that the model can process it. We assume that the input comprises \emph{scalar node features} $\tilde{\vx}_i^0$, which could represent scalar fields, masses, and so on; \emph{vector-valued node features} $\hat{\vx}_i^0$, such as velocities, forces, boundary vectors, and so on; and \emph{coordinate positions} $\vr_i$. 

\textbf{Edge embedding.}
To obtain a representation of the position $\vr_i$ of node $i$, we first consider the relative position between node $i$ and its neighbors $j \in \mathcal{N}(i)$:
\begin{equation}
    \vec{\vr}_{ij} = \vr_j - \vr_i, \quad  r_{ij} = \left \| \vec{\vr}_{ij} \right \|_2, \quad\hat{\vr}_{ij} = \frac{\vec{\vr}_{ij}}{r_{ij}}, \nonumber
\end{equation}
where $r_{ij}$ is the scalar distance and $\hat{\vr}_{ij}$ is the orientation vector between the nodes. To obtain a more expressive representation of the scalar distance $r_{ij}$, we embed it as a projection onto a radial basis:
\begin{equation}
    b(r_{ij}) = b_1(r_{ij}) \oplus \hdots \oplus b_{N_\text{base}}(r_{ij}),
\end{equation}
where \(N_\text{base}\) is the number of basis functions. We use the Bessel basis function among the various available options. The orientation information is encoded, using \eqq{eq:rot}, in the relative rotation matrix between neighboring nodes, 
\begin{equation}\label{eq:rot_loc}
    \theta_{ij} = -\atantwo( \hat{\vr}_{ij}), \ \ \
        \mR_{\theta_{ij}} = \begin{bmatrix}
    \cos \theta_{ij} & -\sin \theta_{ij}\\
    \sin \theta_{ij} & \cos \theta_{ij}.
\end{bmatrix}.
\end{equation}
This orientation matrix $\mR_{\theta_{ij}}$ ensures equivariance locally in the message-passing layer between neighbors. We must also ensure equivariance globally, so we also derive the nodes' global orientation matrix: 
\begin{equation}\label{eq:rot_glob}
    \alpha_i = -\atantwo\left ( \hat{\vr}_i\right), \ \ \ \ \ \
    \mR_{\alpha_i} = \begin{bmatrix}
    \cos \alpha_i & -\sin \alpha_i\\
    \sin \alpha_i & \cos \alpha_i
\end{bmatrix}.
\end{equation}
Here the angle $\alpha_i$ is the global angle between the position vectors $\hat{\vr}_i$ and the global positive $x$-axis

\textbf{Node embedding.}
The node input to the network consists of 
scalar node features $\tilde{\vx}_i^0$ and vector-valued node features $\hat{\vx}_i^0$. We get the node representation by using:
\begin{equation}
    \tilde{\vx}_i^1 = \text{MLP}( \tilde{\vx}_i^0), \quad  \hat{\vx}_i^1 = \text{SO2-MLP}( \hat{\vx}_i^0, \alpha_i),
\end{equation}
where the rotational embedding $\hat{\vx}_i^1$ is obtained using \eqq{eq:rot_mlp}, without any scalar features.

\textbf{Message-passing layer.}
We are now prepared to construct the message-passing layer, which serves as the central building block of a GNN, facilitating interactions between nodes through the message-passing function $f^m$. 
An overview of the message-passing steps is illustrated in Figure~\ref{fig:mess_explained}.\looseness=-1

The inputs to the message-passing layer include the node features $\vx_i^k$, the features of neighboring nodes $ \boldsymbol{x}_j^k $, the edge distance embedding $ \vb_{ij}$, and the local rotational matrix $ \boldsymbol{R}_{\theta_{ij}}$ (refer to \eqq{eq:rot_loc}). The message-passing layer proceeds through the following steps:

\begin{align}
    \tilde{\vm}_{ij}^{k} \oplus \hat{\vm}_{ij}^{r,k}  &= \tilde{\vx}_i \oplus \tilde{\vx}_j \oplus 
                \vb_{ij} \oplus
                \boldsymbol{R}_{\theta_{ij}}\hat{\vx}_i\oplus \boldsymbol{R}_{\theta_{ij}}  \hat{\vx}_j, \nonumber\\
   \tilde{\vm}_{ij}^{k+1} \oplus \hat{\vm}_{ij}^{r,k+1} &= f^m(\tilde{\vm}_{ij}^{k} \oplus \hat{\vm}_{ij}^{r,k} ), \nonumber \\
   \tilde{\vm}_{ij}^{k+1} \oplus \hat{\vm}_{ij}^{k+1} &= \tilde{\vm}_{ij}^{k+1}\oplus \boldsymbol{R}_{-\theta_{ij}} \hat{\vm}_{ij}^{r,k+1}, \nonumber\\
   \tilde{\vx}_{ij}^{k+1} \oplus \hat{\vx}_{ij}^{k+1} &= f^a_{j\in \mathcal{N}(i)}\left ( \tilde{\vm}_{ij}^{k+1} \oplus \hat{\vm}_{ij}^{k+1}\right).
\end{align}

Here, we concatenate the node features $i$ with its neighboring features $j\in \mathcal{N}(i)$ before passing them through the message-passing function $f^m$. We can employ arbitrary message-passing due to Proposition \ref{prop:equivariance}. Additionally, $ f^a $ represents any permutation-invariant aggregation function, with $f^a$ a summation in our case.

Even though we could potentially apply an arbitrary message-passing layer, we limit ourselves to investigating two: one inspired by a graph Transformer, using self-attention, and one using MLPs in the message function, similar to the one used in \citep{brandstetter2022message}. These are respectively defined next.

\textbf{SE2Conv-MLP.} This operation is given simply by:
 \begin{equation}\label{eq:mess_mlp}
    f^m = \text{MLP}(\vm_{ij})
\end{equation}
\textbf{SE2Conv-Trans.} This operation is defined by the following sub-operations:
\begin{align}\label{eq:self_attention}
     f^m &= \text{Linear}\left( \text{LeakyReLU}\left(\vz_{ij}\right)\right), \text{where} \nonumber \\  
     \alpha_{ij} &= \text{softmax} \left(\frac{\text{Linear} \left(\text{LeakyReLU} \left(\text{LN}\left(\vz_{ij} \right) \right)\right)}{\sqrt{d}}\right)\nonumber\\
     \vz_{ij} &= \text{Linear} \left ( \vm_{ij}\right ),  
\end{align}
where LN stands for layer normalization \citep{ba2016layer}.
\begin{figure}[t]
\vskip 0.2in
\begin{center}
\centerline{\includegraphics[width=0.65\linewidth]{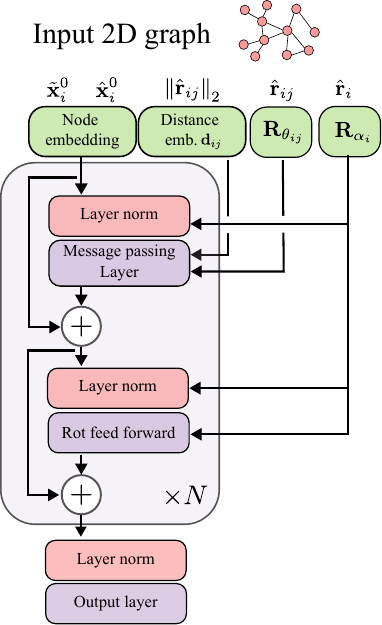}}
\caption{\textbf{An outline of our model.} The top green blocks denote the input embedding of the data, followed by the main block that comprises a message-passing and a feed-forward layer. Finally, the output layer aligns with what we aim to predict.}
\label{fig:model_overview}
\end{center}
\vskip -0.2in
\end{figure}
\textbf{Feed-forward block.} 
A feed-forward layer is applied between the message-passing layer. To do this, we use the \mlp{} in ~\eqq{eq:rot_mlp}, where the input and output dimensions of the node features are equal.

\textbf{Layer norm.} 
For layer normalization, we ensure equivariance by employing the separable layer norm detailed in \citep{liao2023equiformerv2}, adapted to two dimensions, with the maximum degree set to one, similar to $L_{\text{max}}$ in \citep{liao2023equiformerv2}. Essentially, this means that we independently normalize scalar features $ \tilde{\vx}_i^k $ and rotation features $ \hat{\vx}_i^k $. For scalar features, the normalization is defined as:

\begin{equation}
    \tilde{\vx}^\text{out}_i = \boldsymbol{\gamma}^s \circ \left ( \frac{\tilde{\vx}^\text{in}_i-{{\mu}}_s}{{\sigma}_s} \right ) .
\end{equation}

Here, ${\gamma}^s$ (subscript $s$ for scalars) is a vector with the same dimension as $\tilde{\vx}_i$, $\circ$ is the element-wise multiplication, $ {\mu}_s = \frac{1}{C} \sum_{c=1}^C \tilde{x}^\text{in}_{i,c}$ is the mean, and \mbox{${\sigma}_s = \frac{1}{C} \sum_{c=1}^C \left (\tilde{x}^\text{in}_{i,c}- {\mu}_s \right )^2$} the standard deviation calculated across channels $C$. On the other hand, for rotational features, the normalization is performed as:
\begin{equation}
    \hat{\vx}^\text{out}_i = \frac{{\gamma}_r^1\hat{\vx}_i^{\text{in},1}}{{\sigma}_r}\oplus \hdots \oplus  \frac{{\gamma}_r^{N_r}\hat{\vx}_i^{\text{in},N_r}}{{\sigma}_r}.
\end{equation}
Here, ${\gamma}^i_r$ (subscript $r$ for rotation) are scalars and $\sigma_r = \sqrt{ \frac{1}{C} \sum_{c=1}^C  \frac{1}{2}\sum_{m=1}^2\left (\hat{x}^\text{in}_{m,c,i} \right )^2 }$, where we also average over the two parts of the rotation representation.  This ensures equivariance since sum and scalar multiplication are equivariant operations in SO(2).

\textbf{Output layer.} 
Our model provides a straightforward approach to predicting either a scalar feature, such as the value of a scalar field or energy, or a rotational feature, such as velocity, force, or movement in space. We again utilize the \mlp{} (cf.~\eqq{eq:rot_mlp}) for this purpose. For predicting a scalar output, if $\tilde{\vx}_i^{K} \oplus \hat{\vx}_i^{K,r}$ represents the output after $K$ layers of message-passing, the scalar output is computed as:
\begin{align}\label{eq:out_scalar}
    \tilde{\vx}_i^{K} \oplus \hat{\vx}_i^{K,r} &= \tilde{\vx}_i^{K} \oplus \mR_{\alpha_i} \hat{\vx}_i^{K}, \nonumber \\
     {\vx}_i^{s,\text{out}} &= \text{MLP}( \tilde{\vx}_i^K \oplus \hat{\vx}_i^{K,r}).
\end{align}
For the rotational output, the process is:
\begin{align}\label{eq:out_rot}
    \tilde{\vx}_i^K \oplus \hat{\vx}_i^{K,r} &= \tilde{\vx}_i^{K} \oplus \mR_{\alpha_i} \hat{\vx}_i^{K}, \nonumber \\
     \hat{\vx}_i^{\text{out},r} &= \text{MLP}( \tilde{\vx}_i^K \oplus \hat{\vx}_i^{K,r}), \\
     \hat{\vx}_i^\text{out} &=  \mR_{-\alpha_i} \hat{\vx}_i^{\text{out},r}. \nonumber
\end{align}
Here, ${\vx}_i^{r,out}$ represents the network's rotational output, such as a vector. A pooling layer can be applied if graph prediction is the objective.

We construct a deep SE(2) GNN, outlined in Figure~\ref{fig:model_overview}, inspired by the block structure of the Transformer model \citep{vaswani2017attention}. Each block in our model consists of an SE(2) message-passing, an SE(2) feed-forward layer, and SE(2) layer norms. Additional details of our method are in Appendix \ref{add:method}.

\section{Experiments}\label{sec:experiments}
In this section, we leverage the SE(2) GNN outlined in Section \ref{sec:method} across different experiments. While our primary focus is on developing surrogate models for 2D fluid simulation, we start with a preliminary experiment to demonstrate the equivariance of our model and illustrate the significance of this property. Subsequently, we delve into experiments that revolve around the fluid simulation problem. Additionally, in Appendix \ref{app:additional_navier_expermint}, we include a small experiment comparing our model to another type of SO(2) message passing that does not use projection.

\begin{figure}[t]
\vskip 0.2in
\begin{center}
\centerline{\includegraphics[width=0.6\columnwidth]{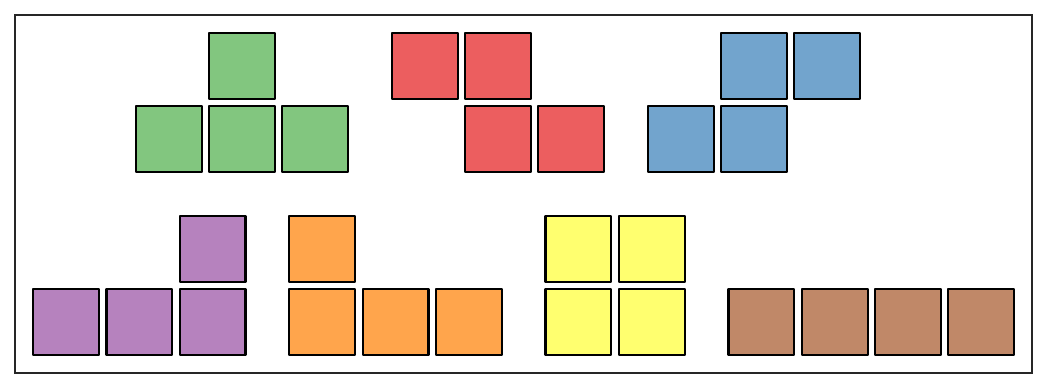}}
\caption{The seven shapes in the Tetris training dataset. The tests will of these shapes rotated at random angles $\theta$ (see Figure \ref{fig:tetris_test_blocks} in the appendix).}
\label{fig:tetris_train}
\end{center}
\vskip -0.2in
\end{figure}

\begin{figure}[t]
\vskip 0.2in
     \centering
     \begin{subfigure}
         \centering
         \includegraphics[width=0.9\linewidth]{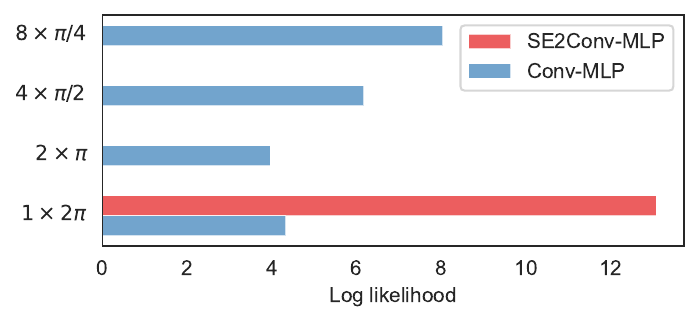}
     \end{subfigure}
     \vfill
     \begin{subfigure}
         \centering
         \includegraphics[width=0.9\linewidth]{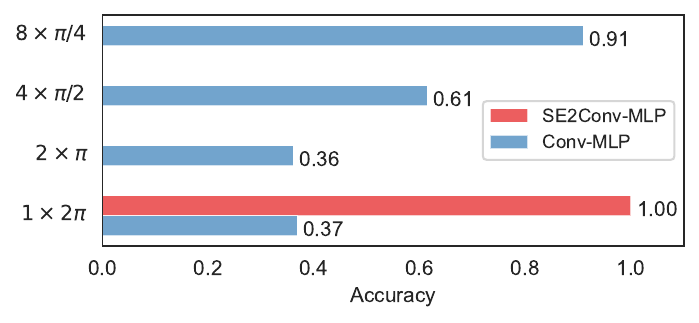}
     \end{subfigure}
        \caption{Our SE(2) model obtains perfect accuracy despite being trained only a single copy of each Tetris shape, whereas the regular message-passing network obtains similar results only after being exposed to eight copies of each shape.}
        \label{fig:tetris_results}
\vskip -0.in
\end{figure}

\subsection{Equivariance evaluation using 2D Tetris classification}

To showcase the SE2-network's capabilities, we consider the task of classifying the seven shapes in 2D Tetris, as depicted in Figure \ref{fig:tetris_train}. The goal is to classify these shapes when they undergo random rotations by an angle $\beta$ from their initial orientation. Each Tetris shape is described by its coordinates $\vr_i$. To learn to classify these shapes, we generate four distinct training datasets outlined in Table~\ref{tab:tetris_dataset}. 

\begin{table}[ht]
\caption{The four different datasets for learning to classify the rotated Tetris shapes.}
\label{tab:tetris_dataset}
\begin{center}
\begin{footnotesize}
\begin{sc}
\begin{tabular}{lcccr}
\toprule
Dataset &\mbox{Rotation ang.}&\# per shape & \# tot \\
\midrule
 $1\times 2\pi$    & $2\pi$& 1&7 \\
 $2\times \pi$  & $\pi$&2& 14\\
 $4\times \pi/2$ & $\pi/2$ &4&28 \\
 $8\times \pi/4$ & $\pi/4$ &8&   56      \\
\bottomrule
\end{tabular}
\end{sc}
\end{footnotesize}
\end{center}
\vskip -0.1in
\end{table}
Except for the original dataset with the seven shapes, we employ data augmentation techniques by rotating the Tetris shapes at varying angles, including $\pi$, $\pi/2$, and $\pi/4$. We also create a test set with the seven shapes, randomly rotated 100 times each, resulting in a test set of 700 shapes.

Our model consists of a two-layer SE(2) message-passing neural network (as detailed in Section \ref{sec:method}), which utilizes the message-passing layer \eqq{eq:mess_mlp}. Creating an input node embedding poses a slight challenge since we only have node positions and aim to maintain translational equivariance. Therefore, we create an embedding of the nodes using the relative distance vector $\hat{\vr}_{ij}$, expressed as:

\begin{equation} \tilde{\vx}_i^1\oplus \hat{\vx}_i^1 =\sum_{j\in \mathcal{N}(i)} \text{SO2-MLP}( \hat{\vr}_{ij}, \theta_{ij}).\end{equation}

The network's output is mapped to a single scalar using the output layer \eqq{eq:out_scalar}, with an output dimension of one. Subsequently, a pooling layer sums the node outputs of the graph. We also develop an invariant model version, which mirrors the SE(2) model but excludes rotations. Both models are trained by minimizing the cross entropy loss during 100 epochs using the Adam optimizer~\citep{kingma2014adam}, with a learning rate of $10^{-3}$. Additional details are found in Appendix \ref{app:tetris}.

The results in Figure \ref{fig:tetris_results} demonstrate that our SE(2) model achieves perfect accuracy even when trained on the smallest dataset with only the seven original shapes. In contrast, when augmented with eight rotations, which means 56 shapes in the training dataset, the non-rotational message-passing model achieves an accuracy of 0.96. Although the test accuracy eventually becomes high, the log-likelihood remains lower than that of our SE(2) model. Additional plots of the Tetris experiment are found in Appendix \ref{app:tetris}.

\subsection{Navier-Stokes simulation}\label{sec:navier-exp1}
We now shift attention to training simulation surrogates using our SE(2) GNN, specifically, solving the Navier-Stokes equations \eqq{eq:navier} of fluid dynamics. Alongside the velocity field, we introduce a scalar field representing smoke, which moves with the velocity field -- a process known as advection. However, the scalar field only influences the velocity field through an external buoyancy force term; this is called weak coupling. Our training data, akin to \citep{brandstetter2022clifford}, is generated on a grid with a spatial resolution of $128\times 128$, and a timestep of $\Delta t = 1.5$ seconds, utilizing $\Phi_{\text{FLOW}}$ for simulations \citep{holl2020phiflow}.

\begin{figure}[th]
\vskip 0.2in
\begin{center}
\centerline{\includegraphics[width=0.8\columnwidth]{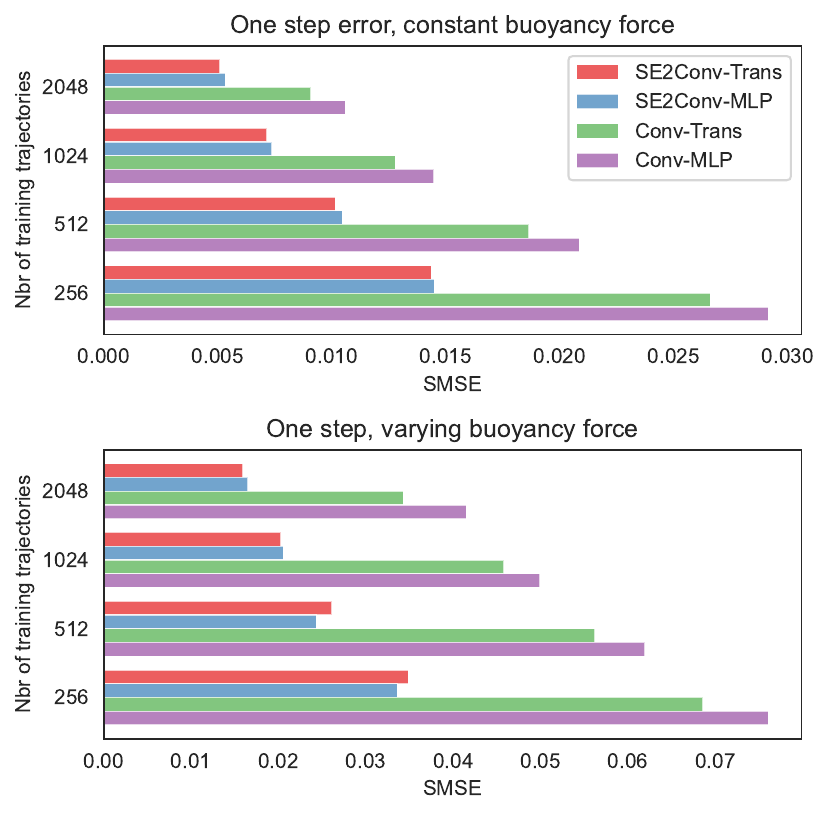}}
\caption{Our proposed approach is vastly more data-efficient than alternatives -- e.g. already at only 256 training trajectories, it matches the one-step error of TransConv at 2048 trajectories.}
\label{fig:navier_one_step}
\end{center}
\vskip -0.2in
\end{figure}
\begin{figure}[h]
\vskip 0.2in
\begin{center}
\centerline{\includegraphics[width=0.8\columnwidth]{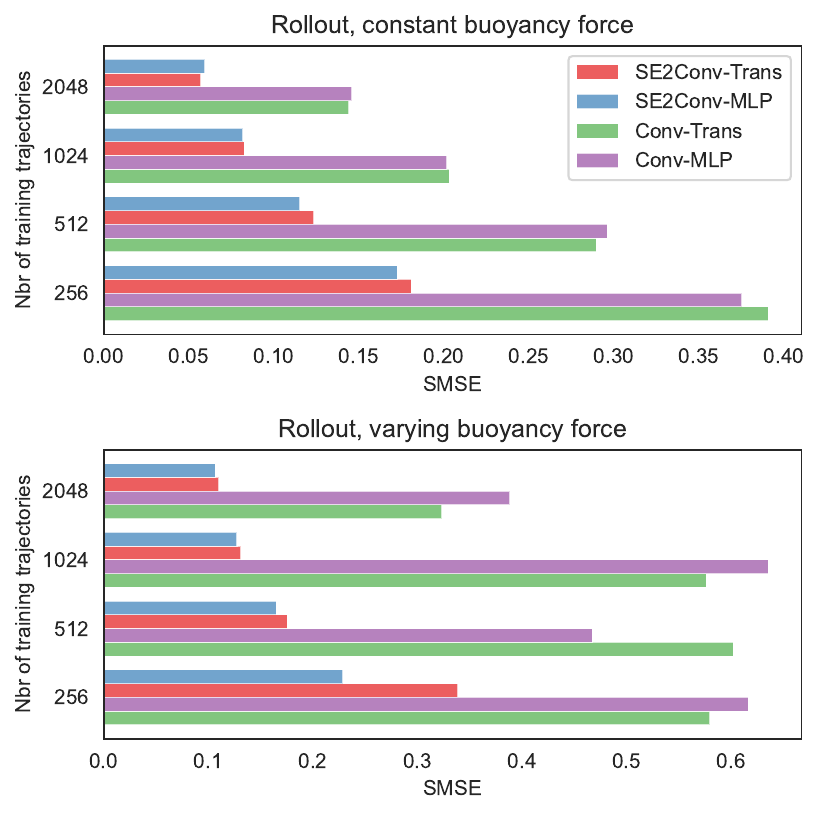}}
\caption{The improvements with respect to rollout errors are even more pronounced for our approach (cf. one-step errors in Figure~\ref{fig:error_one_step}), again showcasing the superior data efficiency of our approach.}
\label{fig:rollout}
\end{center}
\vskip -0.2in
\end{figure}

Following a similar approach to \citep{brandstetter2022clifford}, we incorporate a buoyancy force of $f=[0,0.5]$. However, we also create a new dataset by introducing variations in the buoyancy force. In particular, for each simulation trajectory, we randomly draw $f_x$ and $f_y$ from a continuous uniform distribution $f_x\sim\mathcal{U}(-0.7,7)$, defining the buoyancy force as $\vf = [f_x, f_y]$. For more on how the data is generated, we refer to \citep{brandstetter2022clifford}.

We aim to develop simulation surrogates for irregular domains, with accuracy obtained from a denser domain. We randomly sample $1024$ nodes from the regular grid for each simulated trajectory to create an irregular domain and then create the edges using Delaunay triangulation \citep{preparata2012computational}. Success in this endeavor would yield a model capable of operating on a sparse, irregular domain while maintaining the accuracy achieved by simulations on a dense square grid.

The objective is to forecast the scalar and vector fields at the next timestep based on a history of three timesteps. The number of timesteps is arbitrary and could be changed. The input data comprises node positions $\vr_i$, scalar field values $\vu_i$, and velocity field values $\vv_i$ for the previous three timesteps.
It also includes boundary normal vectors $\hat{\vn}_i$, which are zero for nodes not situated at the boundary, and a constant buoyancy force vector $\vf_i$ applied uniformly across the grid.

For this task, we develop two distinct seven-layer SE(2) GNN models: one utilizing SE2Conv-MLP, cf.~\eqq{eq:mess_mlp} and the other employing SE2Conv-Trans, cf.~\eqq{eq:self_attention}. Additionally, we create two non-equivariant counterparts that are identical to the SE(2) models, except that they lack rotational feature processing (model details are found in Appendix \ref{app:navier_extra}). We train each model by minimizing the summed mean square error, used in \cite{brandstetter2022clifford} between the predicted fields and the actual ones as
\begin{equation}\label{eq:smse_loss}
    \mathcal{L}_{SMSE} = \frac{1}{N_b}\sum_{i=1}^{N_b} (u_i- u_i^t)^2 + (v_{i,x}-v_{i,x}^t)^2 + (v_{i,y}-v_{i,y}^t)^2 
\end{equation}
where $N_b$ is the number of nodes in the batch, and superscript $t$ are the target values.

The experimental results, depicted in Figures \ref{fig:navier_one_step} and \ref{fig:rollout}, demonstrate the superiority of the SE(2) models over the naive non-equivariant counterpart across all scenarios. This performance gap widens notably when considering the rollout error (the accumulated prediction error over multiple timesteps). Notably, the advantages of employing SE(2) become even more pronounced in scenarios involving varying forces. 

\subsection{Navier-Stokes simulation with obstacle}
In this experiment, we investigate how well our proposed approach predicts the trajectory of incoming smoke that encounters an obstacle. The lineup of the simulations is similar to that in Section~\ref{sec:navier-exp1}, but instead of utilizing a random initial smoke, we have a smoke inlet where smoke seeps in at a constant rate. We also add an obstacle in the shape of a ball to complicate the simulation further -- see Figure~\ref{fig:flow}.   

We use the SE(2) message-passing model with the SE2Conv-Trans convolutional layer and its invariant counterpart (cf.~Section~\ref{sec:navier-exp1}), with the addition that we add a scalar feature representing the incoming smoke. We train the models similarly, using the SMSE loss \eqq{eq:smse_loss}. For additional information on the simulation, see Appendix \ref{add:navier_sim}.
\begin{figure}[t]
\vskip 0.2in
\begin{center}
\centerline{\includegraphics[width=0.95\columnwidth]{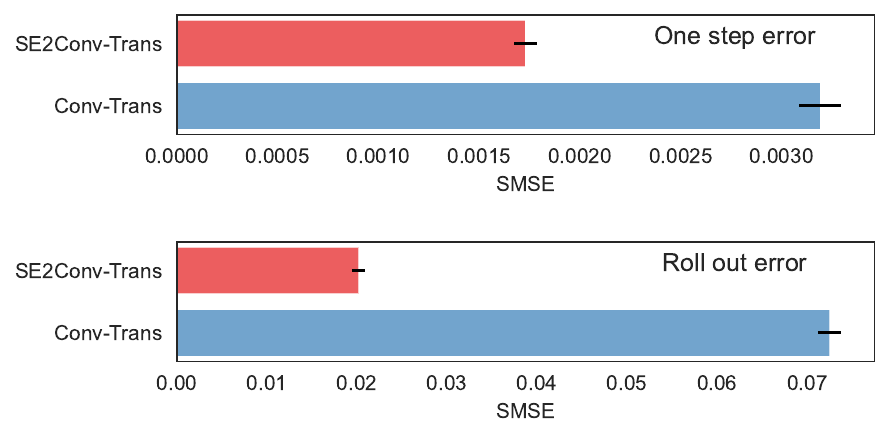}}
\vspace{-4pt}
\caption{Our approach (red) obtains significantly lower one-step as well as rollout errors compared to the non-invariant counterpart (blue). The rollout error is, however, even more reduced, which showcases the ability of our model to maintain accurate prediction accuracy even over longer time horizons.}
\label{fig:error_one_step}
\end{center}
\vskip -0.2in
\end{figure}

\begin{figure}[ht]
\vskip 0.2in
\begin{center}
\centerline{\includegraphics[width=0.75\columnwidth]{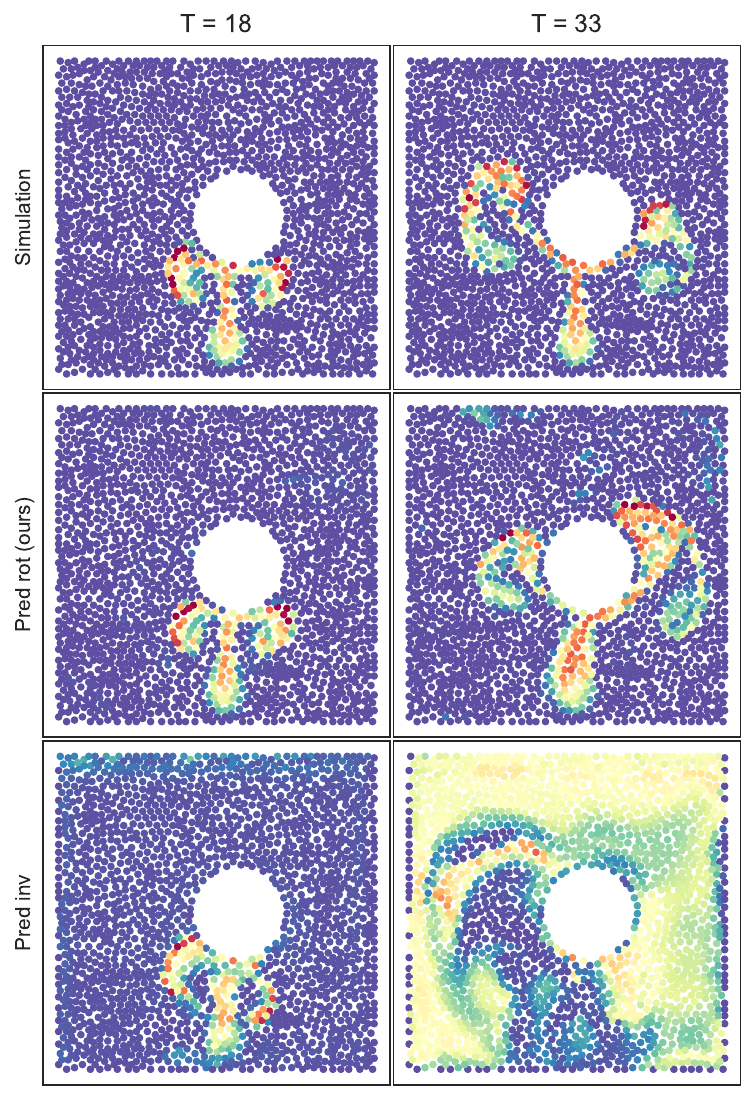}}
\caption{Projected evolution of a smoke encountering an obstacle. The models are fed with the first three simulated timesteps (first four seconds). We show the actual simulation (top row) and the respective model predictions at 18 seconds and 33 seconds. Our approach (middle) obtains significantly more accurate predictions even at longer time horizons. For a live visualization of this figure, see \url{https://github.com/mariabankestad/SE2-GNN}.}
\label{fig:roll_out_long_smoke}
\end{center}
\vskip -0.34in
\end{figure}

The results in Figures \ref{fig:error_one_step} and \ref{fig:roll_out_long_smoke} clearly show that our approach is well-suited for predicting the flows even in this more challenging setup. In particular, Figure~\ref{fig:error_one_step} showcases our approach's superior one-step and rollout prediction performance compared to the non-equivariant alternative (particularly for the rollout prediction). Figure~\ref{fig:roll_out_long_smoke}, in turn, provides a qualitative example where it is clear that our approach can maintain significantly more accurate predictions even at longer time horizons.

\section{Related work}

We divide the related work into two sections: first, we discuss surrogate models for fluid simulations, then 2D equivariant models and their counterparts in 3D.

\textbf{PDE surrogates.} 
Surrogate models can, roughly, be divided into three distinct categories:  Physics-informed neural networks (PINNs) \citep{ raissi2019physics, cai2021physics} are neural network models designed to incorporate physical laws and constraints into the training process; Neural operators \citep{li2020neural}, such as Fourier neural operators \citep{li2021fourier, kovachki2023neural} and Deep Operator Networks \citep{goswami2022physics}, extend the ML framework to learn mappings between infinite-dimensional spaces;  
and ML/neural PDE surrogates \citep{zhu2018bayesian, iakovlev2020learning, belbute2020combining,gladstone2024mesh, scalabelTransformLi,hemmasian2024multi} that focus on approximating solutions to specific partial differential equations using neural networks, sometimes a GNN \citep{sanchez2020learning,brandstetter2022message}, trained on data generated from PDE simulations or experiments. Our models fit best into the last category, even though it can adapt to variations in the input domain.

\textbf{Equivariant graph neural networks.}
Several efforts have been undertaken to develop PDE surrogates using equivariant models. \citet{horie2022physics} introduced a model equivariant to isometric transformations. \citet{brandstetter2022clifford} employed geometric algebra to create CNN-based PDE surrogates. \citet{lino2022multi} implemented a multi-level GNN for unsteady Eulerian fluid dynamics, exploiting equivariance by projecting velocities onto edge directions. Other endeavors have explored equivariant models, such as networks equivariant to rotation within the SO(3) group \citep{weiler20183d, geiger2022e3nn, baankestad2023carbohydrate} and CNNs equivariant to the SO(2) group \citep{worrall2017harmonic, weiler2019general, passaro2023reducing}. \citet{passaro2023reducing}, notably, introduced a faster SO(3) GNN by integrating SO(2) message-passing layers.

\section{Conclusions}
In this paper, we introduce a novel SE(2)-equivariant graph neural network (GNN) that exhibits very strong data efficiency and seamlessly operates in non-gridded domains. This was showcased on a proof-of-concept test bed of 2D Tetris shapes and real-world simulations of the Navier-Stokes equations. Our model, both scalable and flexible, significantly outperforms non-equivariant counterparts.

\textbf{Code availability.} Code is available at \url{https://github.com/mariabankestad/SE2-GNN}.

{\small \noindent{\bf Acknowledgments:} This work was funded by Vinnova, grant number 2023-01398. The simulations were performed on the Luxembourg national supercomputer MeluXina. The authors gratefully acknowledge the LuxProvide teams for their expert support.}

\bibliography{reference}
\bibliographystyle{icml2024}

\newpage
\appendix

\onecolumn

\section{Method: Additional details}\label{add:method}
In this section, we provide some additional information on the method section of the main paper. Figure \ref{fig:model_overviews2} shows an overview of the message-passing and feed-forward layers in Section \ref{sec:method}.
\subsection{An equivariance proof}\label{app:proof_equivariance}
We prove Proposition \ref{prop:equivariance}, which states that if if $\mR_{-\alpha_i}$ is a rotation to align $\vx_i, \vr_i$ to the $x$-axis, and $\mR_{\alpha_i}$ is its inverse, then for a function $f = \mR_{\alpha_i} \circ g \circ \mR_{-\alpha_i}$, where $g$ is any nonlinear function, it holds that
\begin{equation}
    \mR  f\left( \vx_i ,\vr_i\right) = f\left (\mR\vx_i,\mR\vr_i \right), \quad \forall \mR \in SO(2).
\end{equation}
Thus, $f$ is equivariant under the SO(2) group.
\begin{proof} 
A function $f$ is equivariant to the rotation if $\mR_{-\alpha_i}$ is a rotation to align $\vx_i, \vr_i$ to the $x$-axis, and $\mR_{\alpha_i}$ is its inverse. Then, for a function $f = \mR_{\alpha_i} \circ g \circ \mR_{-\alpha_i}$, where $g$ is any nonlinear function, it holds that
\begin{equation}
    \mR  f\left( \vx_i ,\vr_i\right) = f\left (\mR\vx_i,\mR\vr_i \right), \quad \forall \mR \in SO(2),
\end{equation}
\begin{equation}\label{eq:proof1}
    \mR_\beta f( \vx_i; r_i,  \gamma_i) =  f( \mR_\beta\vx_i; r_i,  \gamma_i + \beta), \quad \forall \mR_\beta \in SO(2),
\end{equation}
where $\beta$ is the angle that $\mR$ rotates with. We represent the node by its attribute $\vx_i$ and its position in polar coordinates $\vr_i = (r_i, \gamma_i)$, with $r_i$ its distance from the origin and $\gamma_i$ the angle to the $x$-axis. We express the position in polar coordinates to track where the node $i$ is located easily. If we rotate $\vx_i$ by $\beta$ degrees, the node position changes to $(r_i, \gamma_i + \beta)$.

We define the function $ f $ as a composition of three functions: a rotation $\mR_{-\alpha_{i}}$ to align with the $x$-axis, followed by a nonlinear function $  g $, and finally a rotation $\mR_{\alpha_{i}}$ back to the original position, i.e.
\begin{equation}\label{eq_proof3}
    f =\mR_{\alpha_{i}} \circ g \circ \mR_{-\alpha_{i}},
\end{equation}
where $\alpha_{i}$ is the angle to the $x$-axis that $\mR_{\alpha_{i}}$ acts on.

To prove that \eqq{eq:proof1} holds, we write down the equation and then verify that the left and right sides are equal: 
\begin{equation}\label{eq:proof2}
    \left(\mR_\beta \circ \mR_{\alpha_{i}} \circ g \circ \mR_{-\alpha_{i}}\right) \vx_i:r_i, \gamma_i =  \left(\mR_{\alpha_{i}} \circ g \circ \mR_{-\alpha_{i}} \circ \mR_\beta \right) \vx_i;r_i, \gamma_i.
\end{equation}

For the left-hand side of \eqq{eq:proof2}, we have:
\begin{align}
    \left(\mR_\beta \circ \mR_{\alpha_{i}} \circ g \circ \mR_{-\alpha_{i}}\right) \vx_i; r_i, \gamma_i &= 
    \left(\mR_\beta \circ \mR_{\alpha_{i}} \circ g \right)\mR_{-\alpha_{i}} \vx_i;r_i, \gamma_i-\alpha_i \quad (\text{but } \alpha_i = \gamma_i)\nonumber \\
 &=\left(\mR_\beta \circ \mR_{\gamma_{i}}\right)  g \left(\mR_{-\gamma_{i}} \vx_i;r_i, 0 \right) \nonumber\\
 &=\left(\mR_{\beta +  \gamma_i} \right)  g \left(\mR_{-\gamma_{i}} \vx_i;r_i, 0 \right). \nonumber
\end{align}
For the right-hand side of \eqq{eq:proof2}, we have:
\begin{align}
    \left(\mR_{\alpha_{i}} \circ g \circ \mR_{-\alpha_{x}} \circ \mR_\beta \right) \vx_i; r_i, \alpha_i &= \left(\mR_{\alpha_{i}} \circ g \circ \mR_{-\alpha_{i}} \right)   \mR_\beta\vx_i; r_i, \gamma_i + \beta \nonumber\\ 
    &= \left(\mR_{\alpha_{i}} \circ g \right) \mR_{-\alpha_{i}}    \mR_\beta\vx_i; r_i, \gamma_i + \beta -\alpha_{i} \nonumber  \quad (\text{but } \alpha_i = \gamma_i + \beta)\\
    &= \left(\mR_{\gamma_{i} + \beta} \circ g \right) \mR_{-\gamma_i}\vx_i; r_i, 0 \nonumber \\
    &=\left(\mR_{\gamma_i + \beta}  \right)g(\mR_{-\alpha_i}\vx_i; r_i, 0). \nonumber
\end{align}
Note that $\alpha_{i} = \gamma_i + \beta$, since $\mR_{\alpha_{i}}$ acts on $\mR_\beta\vx_i$.

Since the left and right sides of \eqq{eq:proof2} are equal for all $\beta \in (0, 2\pi)$, this concludes the proof that the function $ f $ is equivariant under rotations.
\end{proof}

\begin{figure}[t]
    \centering
    \begin{subfigure}
        \centering
        \includegraphics[width = 0.25\linewidth]{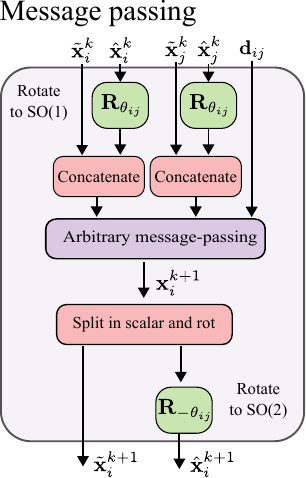}
    \end{subfigure}%
    \hspace{1in}
    \begin{subfigure}
        \centering
        \includegraphics[width = 0.25\linewidth]{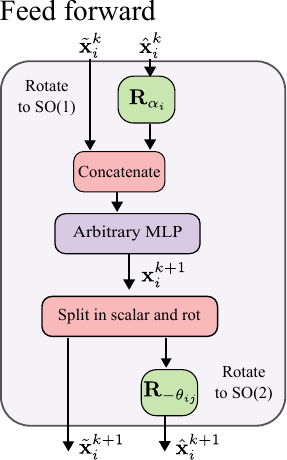}
    \end{subfigure}
    \caption{On overview of the message-passing layer (\textbf{left}) and the feed-forward layer (\textbf{right}).}
    \label{fig:model_overviews2}
\end{figure}

\subsection{SE(2) activation function}

We do not only have to use the relationship in Proposal \ref{prop:equivariance} to create an MLP; we can also use it to construct an SE(2) activation function as:
\begin{align}\label{eq:activation}
    \hat{\vx}_i \oplus \hat{\vx}_i^* &= \text{Activation} \left ( \tilde{\vx}_i \oplus \boldsymbol{R}_{\alpha_i} \hat{\vx}_i\right), \\
    \vx_i &=\tilde{\vx}_i \oplus \boldsymbol{R}_{-\alpha_i} \hat{\vx}_i^*. \nonumber
\end{align}
We can use any activation function, and the equivariance criteria still hold. 

\subsection{Linear layer}
In the same way as for the activation function above, we can create a SO(2) equivariant linear layer:
\begin{align}
    \tilde{\vx}_i^\text{out} \oplus \hat{\vx}_i^{r,\text{out}} &= \text{Linear} \left ( \tilde{\vx}_i^\text{in} \oplus \boldsymbol{R}_{\alpha_i} \hat{\vx}_i^\text{in}\right), \\
     \hat{\vx}_i^\text{out} \oplus \hat{\vx}_i^\text{out} &=\tilde{\vx}_i \oplus \boldsymbol{R}_{-\alpha_i} \hat{\vx}_i^{r,\text{out}}.
\end{align}

Figure \ref{fig:linear_trans} illustrates how this procedure remains equivariant to rotation in the 2D plane. This is because the linear transformation is performed after the features have been aligned to the $x$-axis, making it invariant to the shape's original rotation.
 
\begin{figure}[h]
    \centering
    \begin{subfigure}
        \centering
        \includegraphics[height=1.25in]{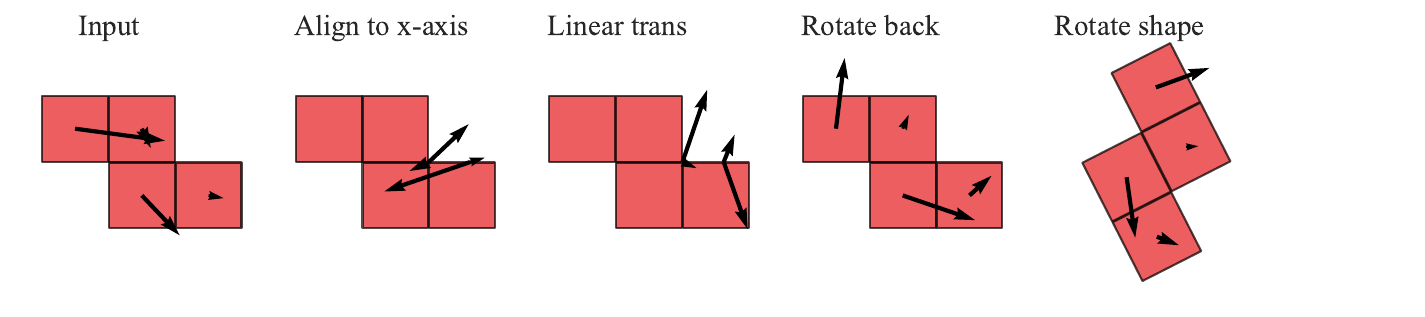}
    \end{subfigure}%
    \begin{subfigure}
        \centering
        \includegraphics[height=1.25in]{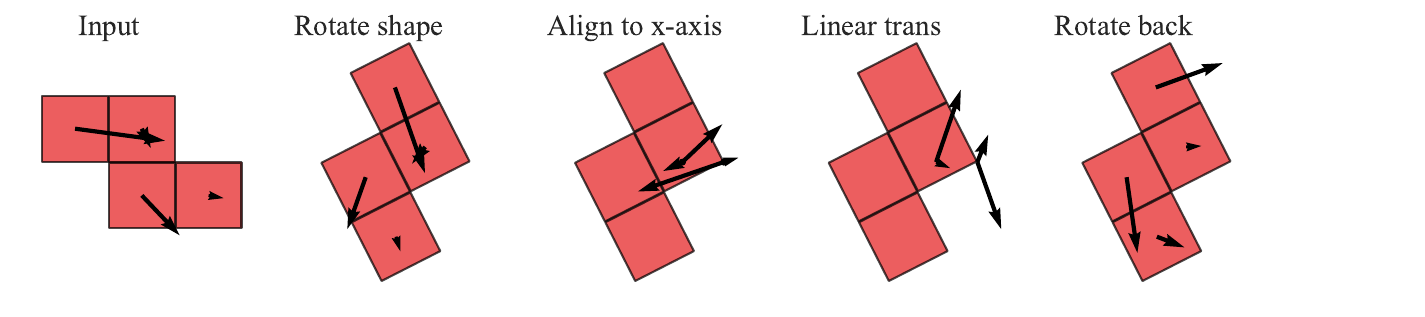}
    \end{subfigure}
    \caption{Illustration of how the linear layer acts on the rotation input features. The input is rotated \emph{before} the linear layer in the \textbf{top} row, while in the \textbf{button} row, the shape is rotated \emph{after} the linear layer.  
    Since the features always align with the $x$-axis before the linear layers, the input to the linear layer is independent of the rotation of the feature, i.e., it is equivariant.}
    \label{fig:linear_trans}
\end{figure}

\section{Additional experiments}
In this section we conduct two additional experiments. The first experiment compares the equivariance error of our version of an activation function in \eqq{eq:activation} to using point-wise sampled nonlinearity \citep{dehaan2021, passaro2023reducing,liao2023equiformerv2}. The second experiment compares our projected SO(2) procedure to achieve equivariance to one that does not use projection on the Navier-Stokes data. We also give implementation details on the Navier-Stokes experiments and some additional results for the experiments in the main paper.

\begin{figure}[h]
\vskip 0.2in
\begin{center}
\centerline{\includegraphics[width=0.5\columnwidth]{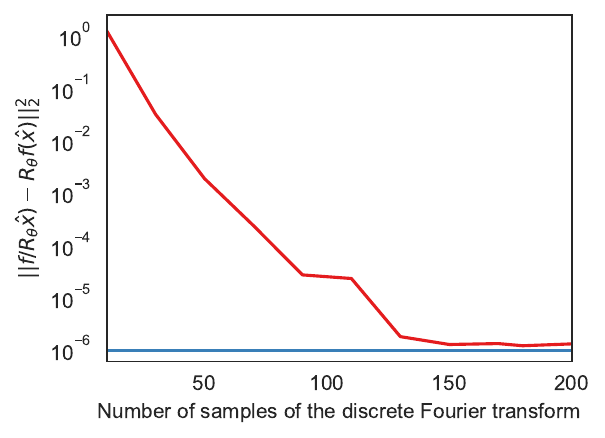}}
\caption{The equivariance error of the model using the pointwise Fourier nonlinearity with different numbers of samples and the rotational nonlinearity proposed in this paper. We used a two-layer message-passing model for this test. The curve is the mean from 50 forward passes of the model.}
\label{fig:eq_error}
\end{center}
\vskip -0.2in
\end{figure}

\subsection{Comparing rotation activation to point-wise sampled non-linearity}
A common way of creating an equivariant activation function is using point-wise sampled nonlinearities \citep{dehaan2021, passaro2023reducing,liao2023equiformerv2}. The activation first converts vectors of all degrees to point samples on
a sphere, applies unconstrained functions $f$ (such as an activation function) to those samples, and finally converts
them back to vectors. Specifically, given an input of rotation features $\hat{\vx}_i$, then, 
\begin{equation}\label{eq:point_wise}
    \hat{\vx}_i = G^{-1}  \left (f(G(\hat{\vx}_i))\right )
\end{equation} 
where $G$ denotes the conversion from vectors to point samples on a sphere. The equivariance error of this activation function depends on the number $n_s$ of sampled points on the sphere; the equivariance error decreases when we increase the number of samples.

The equivariance error of our proposed rotation activation function \eqq{eq:activation} only depends on the numerical accuracy when deriving the rotation matrix $\boldsymbol{R}_{\theta_i}$. Figure \ref{fig:eq_error} compares the equivariance error between our approach and a Fourier sample-based method (with Leaky ReLU as activation function) proposed by \citet{dehaan2021}. We see that the sample-based method reaches the equivariance error of our SE(2)-activation when the number of samples increases, but at a memory cost.

\subsection{Additional results from the Tetris experiment}\label{app:tetris}
Figure \ref{fig:tetris_test_blocks} shows some rotated shapes in the Tetris test dataset. In Figure \ref{fig:tetris_test_error}, the model's test error at different epochs is plotted, comparing the different training datasets. We can see that \emph{SE2Conv-MLP} quickly converges to a perfect test accuracy of 1.

\begin{figure}[ht]
\vskip 0.2in
\begin{center}
\centerline{\includegraphics[width=0.6\columnwidth]{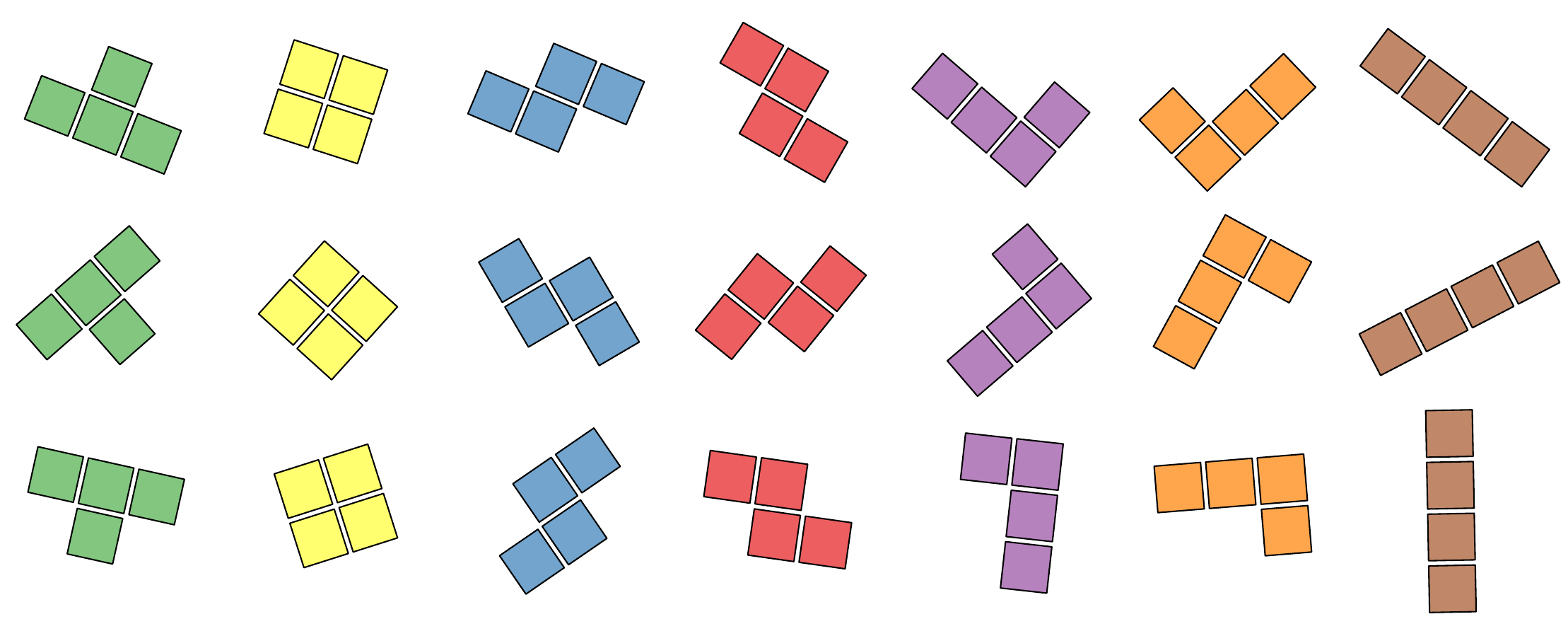}}
\caption{Examples from the test dataset in the Tetris classification experiment.}
\label{fig:tetris_test_blocks}
\end{center}
\vskip -0.2in
\end{figure}

\begin{figure}[ht]
\vskip 0.2in
\begin{center}
\centerline{\includegraphics[width=0.7\columnwidth]{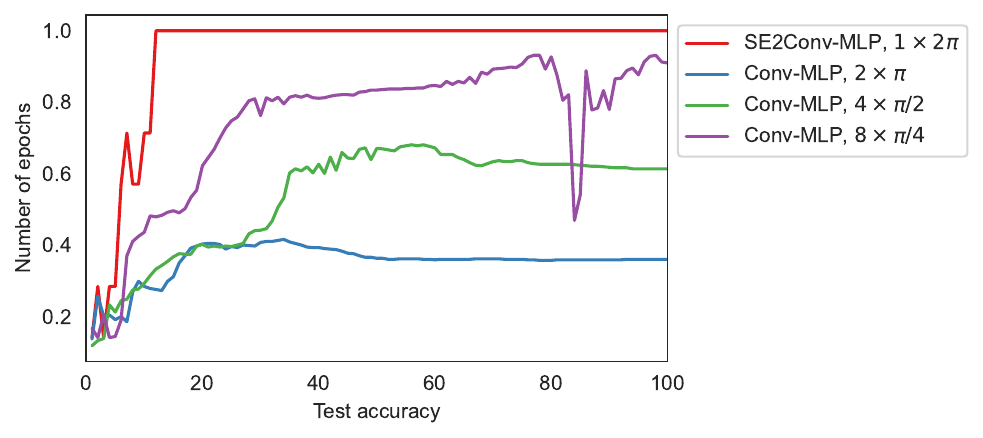}}
\caption{The test accuracy per epoch for the Tetris experiment.}
\label{fig:tetris_test_error}
\end{center}
\vskip -0.2in
\end{figure}

\subsection{Navier-Stokes simulations}\label{app:navier_extra}

\subsubsection{Implementation and training details}
All models are implemented using Pytorch \citep{paszke2017automatic} and Pytorch Geometric \citep{FeyLenssen2019}, and the experiments were conducted using four NVIDIA A100 GPUs. The models were trained using the Adam optimizer \citep{KingBa15}. We used a batch size of 32 (8 per GPU node) and trained for 500 epochs. We started with a learning rate of $1 \times 10^{-3}$, which was decreased during training using cosine annealing. A validation set consisting of 5\% of the training dataset is used for this purpose. Our final model uses the model with the lowest validation error.

We present additional details of our models used in all the Navier-Stokes experiments in Table \ref{tab:model_complexity}. We can see that the time it takes to do a forward pass for the SE(2) models is two to three times longer than for their invariant counterparts. This additional time overhead is mainly due to the derivation of the rotation matrix. Although, for being an equivariant model, the timing of our SE2-Conv-models is low, as we will illustrate in Section \ref{app:additional_navier_expermint}.
\begin{table}[h]
\caption{Data of the respective model describing their model size, simulation time, and memory consumption. The scalar hidden dimension of the invariant models has been scaled up compared to the equivariant model to match their model size.}
\label{tab:model_complexity}
\vskip 0.15in
\begin{center}
\begin{footnotesize}
\begin{sc}
\begin{tabular}{lcccc}
\toprule
  &  SE2Conv-Trans &  Conv-Trans&  SE2Conv-MLP& Conv-MLP\\
\midrule
Nbr model parameters    &6.0e6 & 7.6e6&6.0e6 &7.6e6\\
Time (seconds), forward pass &0.020 & 0.013&0.018&0.006\\
Num layers &7 & 7&7&7\\
Hidden dim scalar &64 & 256&64 &256\\
Hidden dim vector (Nbr irreps with $m=1$) &64 & 0&64&0\\
\bottomrule
\end{tabular}
\end{sc}
\end{footnotesize}
\end{center}
\vskip -0.1in
\end{table}

\subsubsection{Validation metrics}
The one-step error is defined as:
\begin{equation}\label{eq:smse_loss3}
    \mathcal{L}_{SMSE} = \frac{1}{N_b}\frac{1}{N_t-3}\sum_{k=4}^{N_t}\sum_{i=1}^{N_b} (u_i^k- u_i^{t,k})^2 + (v_{i,x}^k-v_{i,x}^{t,k})^2 + (v_{i,y}^k-v_{i,y}^{t,k})^2.
\end{equation}

Here, $N_b$ is the number of nodes in the graph/mesh, and $N_t$ is the number of timesteps in the trajectory. We use three timesteps as an input to our model, which means we do not get a prediction for the three first steps in our trajectory. 

The rollout error is calculated similarly, but with a key difference: the "true" values are only used to predict the first timestep at $ k = 4 $. After that, we use the predicted outputs as inputs. For example, to predict $u_i^5 $, the model takes as input the true fields at $ u_i^{2,t} $ and $ u_i^{3,t} $, along with the predicted output from the previous timestep, $ u_i^4 $. This process continues for subsequent timesteps.

\subsubsection{Additional Navier-Stokes experiments}\label{app:additional_navier_expermint}

Beyond the results presented in the main article, we conducted an experiment to investigate the expressivity of our specific method in the feed-forward and message-passing layer for achieving SO(2) equivariance. Our method involves projecting the node to a principal axis and concatenating the rotational and scalar features before the MLP. To assess this, we compared our approach to an SO(2) MLP that does not project the rotational features and cannot easily combine rotational and scalar features, namely, the SO(2) MLP, proposed in escnn \url{https://github.com/QUVA-Lab/escnn} \citep{cesa2022a}, which we call MLP-escnn. 

The idea for their MLP is that if the rotational input $\hat{\vx} =  \hat{\vx}_i^1\oplus \hdots \oplus  \hat{\vx}^{N_r}$ consist of $N_r$ representations (2-dim vectors), then, we can linearly  combine these, using learned rotational matrices such that
\begin{equation}
    \hat{\vx}^i_\text{out} =  \sum_{j=1}^{N_r}\mR_{\delta_{ij}}\hat{\vx}^j,
\end{equation}
where
\begin{equation}
    \mR_{\delta_{ij}} = \begin{bmatrix}
        w_{ij}^1 &-w_{ij}^2 \\
        w_{ij}^2& w_{ij}^1
    \end{bmatrix}.
\end{equation}
and $ w_{ij}^1,  w_{ij}^2$  are the learnable weights. This linear transformation is equivariant since it only consists of rotations. We can create many arbitrary output representations $N_r^\text{out}$, such that  $\hat{\vx}_\text{out} = \hat{\vx}^1_\text{out}\oplus \hdots \oplus \hat{\vx}^{N_r^\text{out}}_\text{out} $. Using the point-wise sampled nonlinearity in \eqq{eq:point_wise}, we avoid projections. For the scalar part, we use a regular MLP.

Compared to ours, a downside of this approach is that we never mix the rotational and scalar features. To overcome this, we let the parameters of the rotation matrix in the message passing layer depend on the scalar features, such that $[ w_{ij}^1,w_{ij}^2] = \text{MLP}(\tilde{\vx})$.

We construct an experiment where the projected MLP with these MLP-escnn modules in the feed-forward blocks and the message passing layers, and compare these to an \mlp. All models have three layers, 32 scalar features, and 16 rotational features. All MLPs, also the rotational ones, have a hidden dimension three times the output dimension. We train the models on the Navier-Stokes dataset with varying force and 512 samples. 5\% of these samples are selected as validation data.

\begin{table}[h]
\caption{The time it takes to train one epoch with the different models.}
\label{tab:times_diff}
\vskip 0.15in
\centering
\begin{tabular}{lc}
\toprule
                model &   Time (seconds) \\
\midrule
Our \mlp. & 1.93 \\
Our \mlp. with MLP-escnn convolution& 19.3 \\
Our \mlp. with MLP-escnn feed-forward& 1.92 \\
MLP-escnn convolution and MLP-escnn feed-forward& 19.7 \\
\bottomrule
\end{tabular}
\end{table}
\begin{figure}[h]
\vskip 0.2in
\begin{center}
\centerline{\includegraphics[width=0.9\columnwidth]{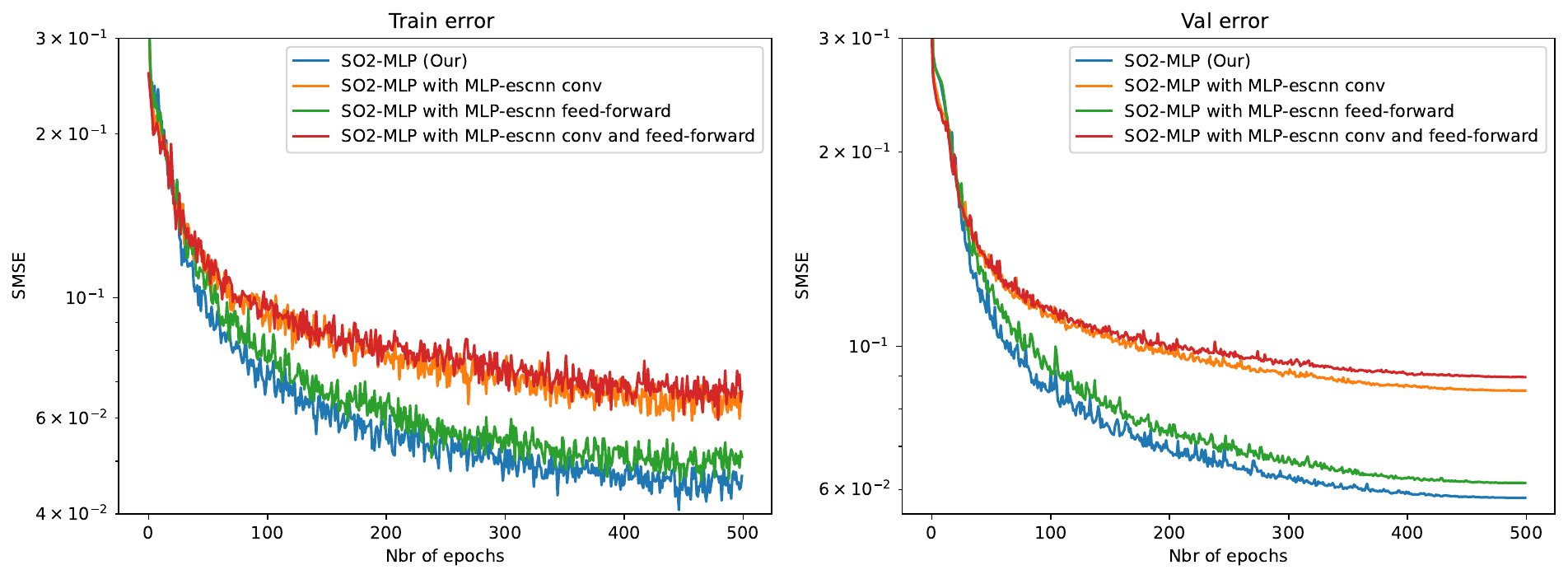}}
\caption{We conducted experiments comparing the train and validation errors of our projected SO(2) MLP to a non-projected SO(2) MLP, specifically MLP-escnn. We replaced the MLPs in both the feed-forward and message-passing layers in these experiments. We compared the training and validation errors to understand the models' expressivity.}
\label{fig:val_diff}
\end{center}
\vskip -0.2in
\end{figure}

Table \ref{tab:times_diff} shows the duration of each epoch for each model type. Notably, the training times for MLP-escnn-based message-passing layers are significantly longer. Figure \ref{fig:val_diff} plots the training and validation errors. Our projected method fits the training dataset better and generalizes the validation dataset more effectively. Implementation details can be found at \url{https://github.com/mariabankestad/SE2-GNN}.
\newpage
\subsubsection{Rollout plots}
We include additional rollout results for both constant buoyancy force (see Figure \ref{fig:roll_out_long}) and varying buoyancy force (see Figures \ref{fig:sim_u} and \ref{fig:sim_v}). The predictions are made using the SE2Conv-Trans model, which was trained on a dataset of 2048 samples and compared to its non-equivariant counterpart, Conv-Trans. Unlike in the training and validation datasets, the nodes are not sampled randomly in these plots. Instead, we have (from trajectories in the test dataset) sampled 1024 nodes using furthest point sampling (FSP) to get more equally spaced nodes for a better visual appearance.

\begin{figure}[h]
\vskip 0.2in
\begin{center}
\centerline{\includegraphics[width=0.5\columnwidth]{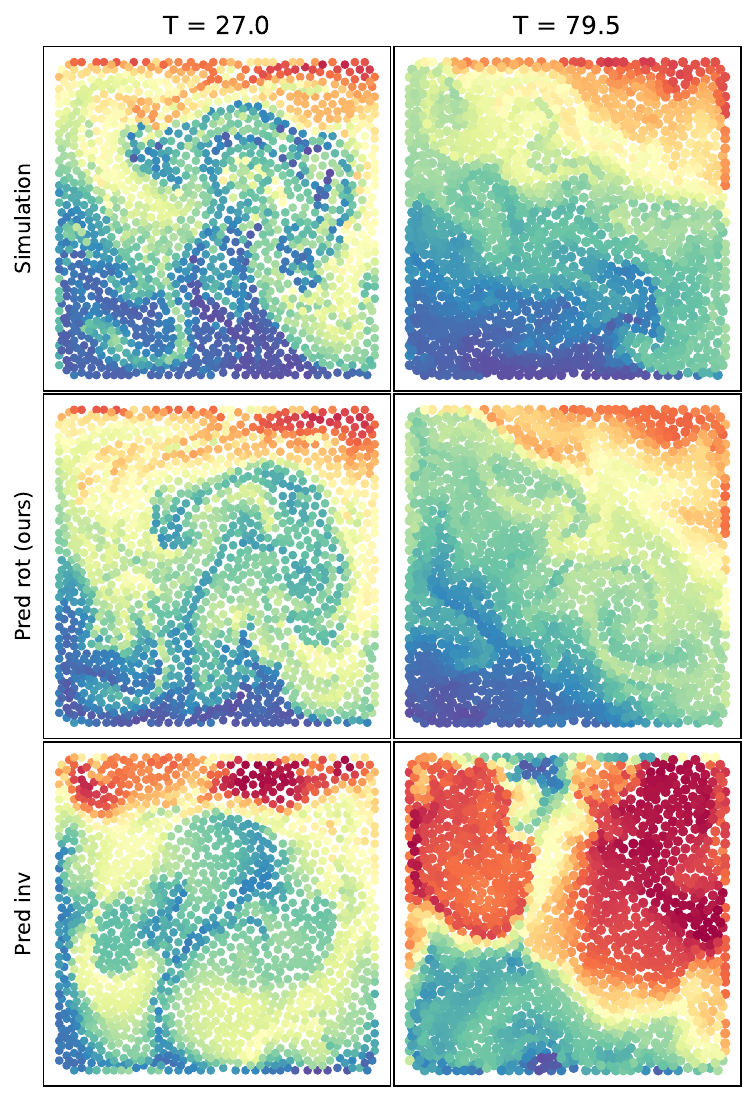}}
\caption{Top: Results using time-consuming simulation (here considered ground truth). Middle: Results for the corresponding timesteps for our approach. Bottom: Corresponding results for the baseline approach. Note how our approach maintains many of the main structures seen in the ground truth, even at longer time horizons, compared to the baseline. For a live visualization of this figure, see \url{https://github.com/mariabankestad/SE2-GNN}.}
\label{fig:roll_out_long}
\end{center}
\vskip -0.2in
\end{figure}

\begin{figure}[ht]
    \centering
    \begin{subfigure}
        \centering
        \includegraphics[height=1.5in]{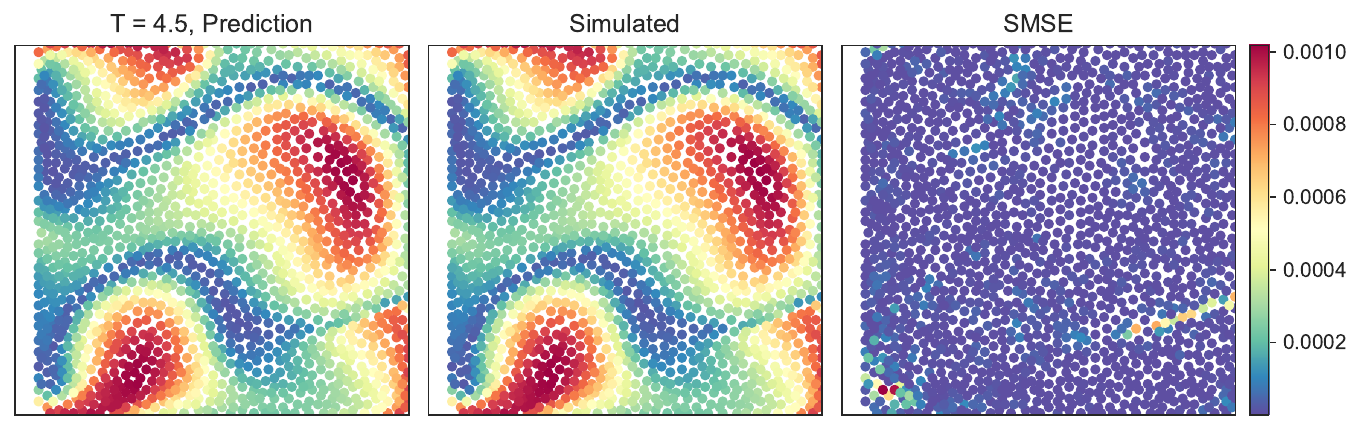}
    \end{subfigure}%
    \begin{subfigure}
        \centering
        \includegraphics[height=1.5in]{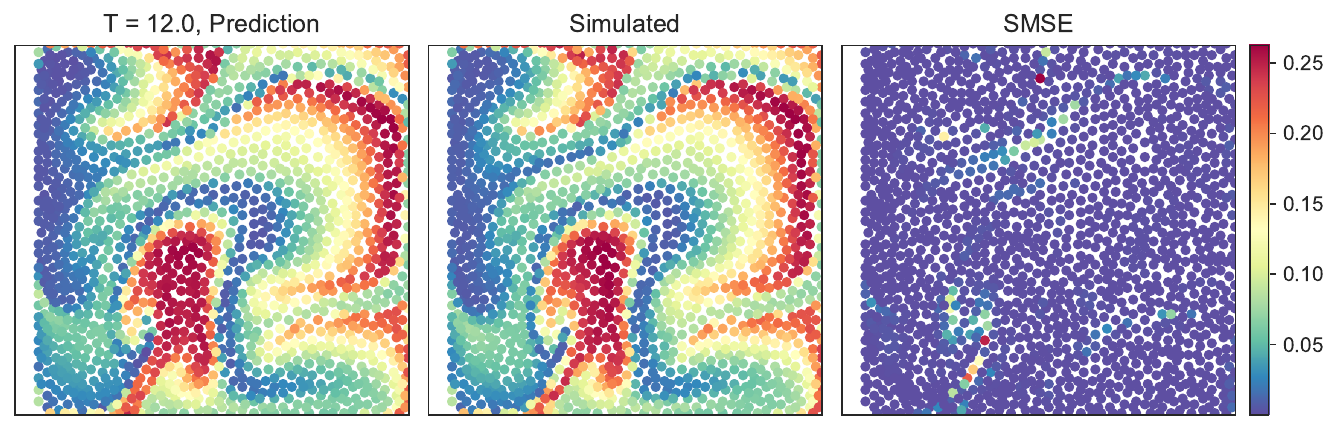}
    \end{subfigure}
    \begin{subfigure}
        \centering
        \includegraphics[height=1.5in]{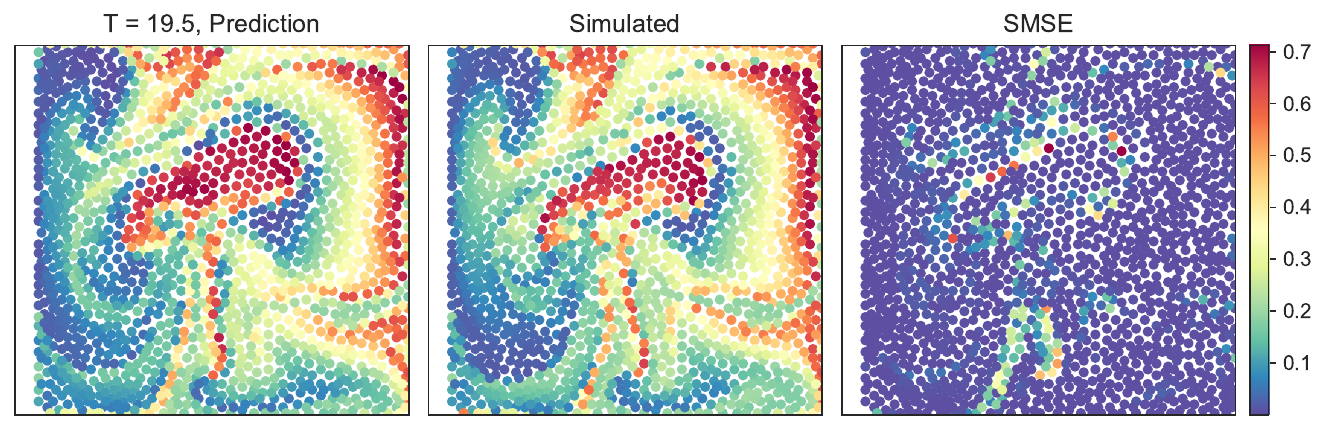}
    \end{subfigure}
    \caption{Rollout scalar field (smoke) results from the Navier-Stokes surrogate models with varying buoyancy forces in the simulated data.}
        \label{fig:sim_u}
\end{figure}

\begin{figure}
    \centering
    \begin{subfigure}
        \centering
        \includegraphics[height=1.5in]{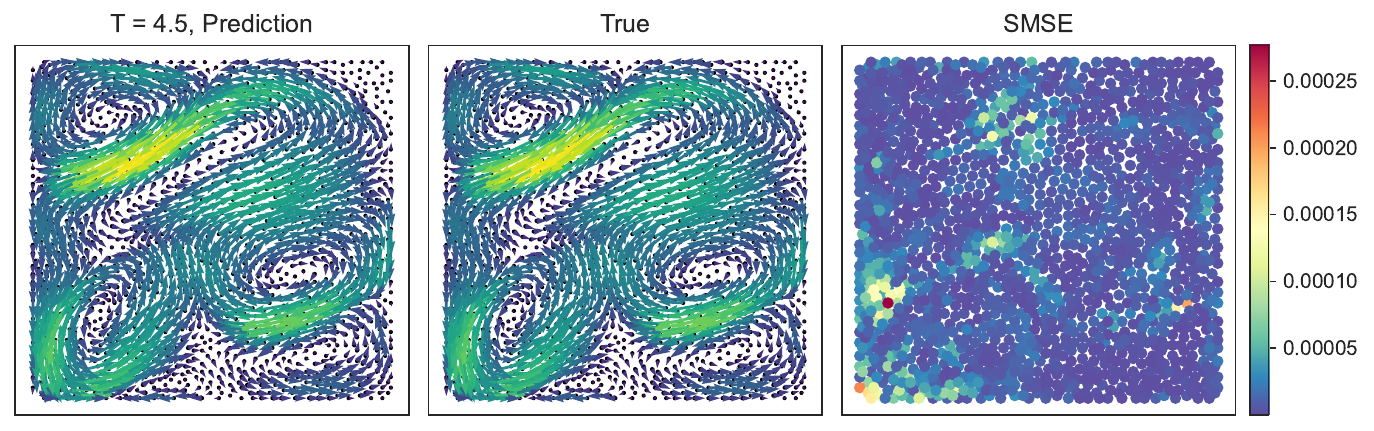}
    \end{subfigure}%
    \begin{subfigure}
        \centering
        \includegraphics[height=1.5in]{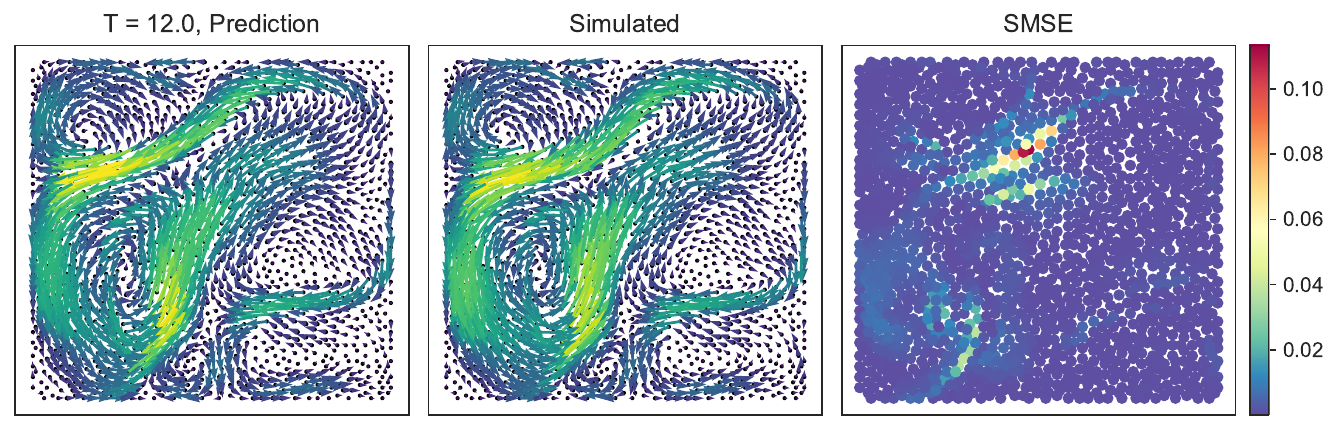}
    \end{subfigure}
    \begin{subfigure}
        \centering
        \includegraphics[height=1.5in]{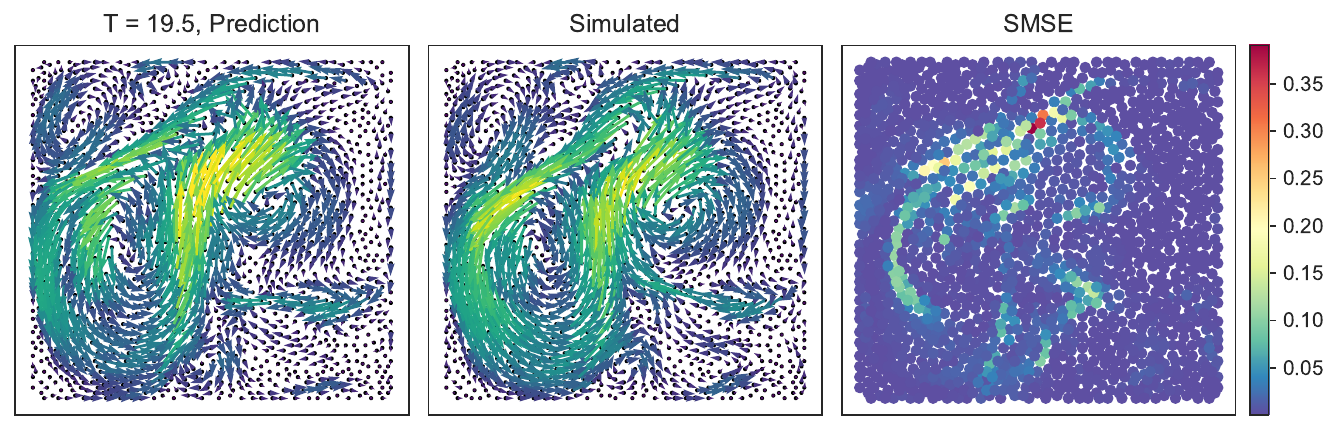}
    \end{subfigure}
    \caption{Rollout vector field results from the Navier-Stokes surrogate models with varying buoyancy forces in the simulated data. }
    \label{fig:sim_v}
\end{figure}

\section{ Simulation details}\label{add:navier_sim}
The Navier-Stokes simulation was created using $\Phi_{\text{FLOW}}$ \citep{holl2020phiflow}, a Python open-source simulation toolkit built for optimization and machine learning applications.

\textbf{Simulations with an obstacle.}
In the smoke simulation around an obstacle, we used a grid with a size of $200\times 200$ grid points, with $\Delta x, \Delta y = 0.5$ (see Figure \ref{fig:grid_obs}). We use a buoyancy force $\vf = (0,0.5)$, and the obstacle has the shape of a circle with a radius of 15 (the domain is $100 \times 100$). The y-central coordinate of the obstacle $y_\text{obs} = 50$, while the x-coordinate is discreetly uniformly sampled as $x_\text{obs} = \mathcal{U}\{20, 80\}$. We also add a smoke inlet with an intensity of 0.5 and a radius 7. The inlet is located y-location $y_\text{inlet} = 9.5$, while the x-location is discreetly uniformly sampled as $x_\text{inlet} =\mathcal{U}\{10, 90\}$. We simulate the smoke for 150 timesets, using MacCormack advection\citep{holl2020phiflow}, where $\Delta t = 0.5$ seconds. After that, we downsample the time by one using every other timestep, resulting in a time trajectory of 75 timesteps with $\Delta t = 1$. We create 512 of these trajectories for training and 16 for testing.

We randomly sampled 1024 nodes per grid to create our training data and used Delaunay triangulation to establish the node connections. Figure \ref{fig:irregular_grid} shows an example grid in the training data, with the smoke inlet also marked.
\begin{figure}[h]
\vskip 0.2in
\begin{center}
\centerline{\includegraphics[width=\textwidth]{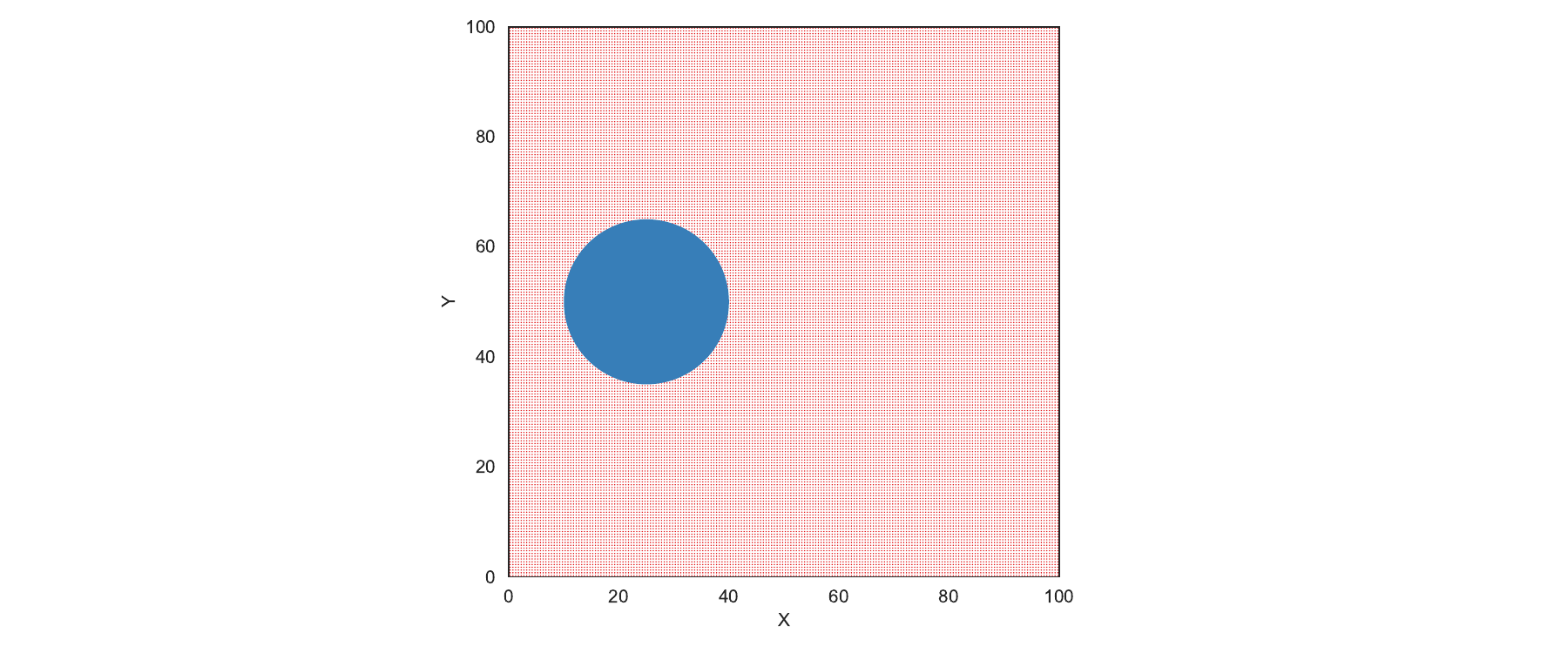}}
\caption{The regular grid with an obstacle that we use to simulate the Navier-Stokes equation.}
\label{fig:grid_obs}
\end{center}
\vskip -0.2in
\end{figure}

\begin{figure}[h]
\vskip 0.2in
\begin{center}
\centerline{\includegraphics[width=0.4\textwidth]{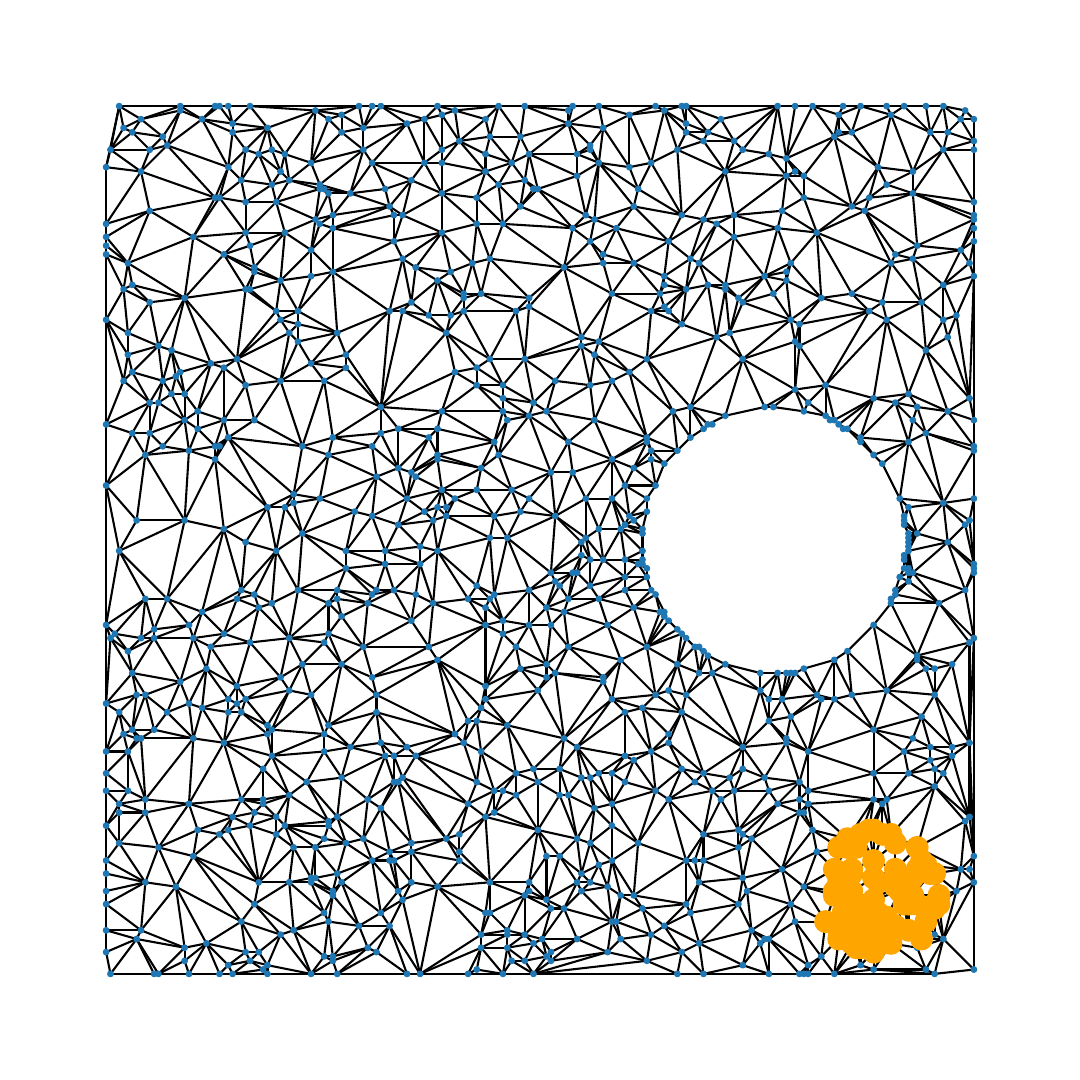}}
\caption{Example of the irregular grid with an obstacle, used for the machine learning model. The nodes where smoke is coming in are also marked.}
\label{fig:irregular_grid}
\end{center}
\vskip -0.2in
\end{figure}
\end{document}